\newtheorem{problem}{\textbf{Problem}}
\newtheorem{definition}{\rm\textbf{Definition}}
\newtheorem{theorem}{\rm\textbf{Theorem}}
\newtheorem{remark}{\rm\textbf{Remark}}
\newtheorem{example}{\rm\textbf{Example}}
  \providecommand\BibTeX{{%
    \normalfont B\kern-0.5em{\scshape i\kern-0.25em b}\kern-0.8em\TeX}}}
\begin{document}

\title{Rule-based Optimal Control for Autonomous Driving}



 \author{Wei Xiao}
 \affiliation{%
 	\streetaddress{Division of Systems Engineering}
 	\institution{Boston University}
 	\city{Brookline}
 	\state{MA}
 	\postcode{02446}
 }
 \email{xiaowei@bu.edu}

 \author{Noushin Mehdipour}
 \affiliation{%
 	\streetaddress{100 Northern Ave.}
 	\institution{Motional}
 	\city{Boston}
 	\state{MA}
 	\postcode{02210}
 }
 \email{noushin.mehdipour@motional.com}

 \author{Anne Collin}
 \affiliation{%
 	\streetaddress{100 Northern Ave.}
 	\institution{Motional}
 	\city{Boston}
 	\state{MA}
 	\postcode{02210}
 }
 \email{anne.collin@motional.com}

 \author{Amitai Bin-Nun}
 \affiliation{%
 	\streetaddress{100 Northern Ave.}
 	\institution{Motional}
 	\city{Boston}
 	\state{MA}
 	\postcode{02210}
 }
 \email{amitai.binnun@motional.com}

 \author{Emilio Frazzoli}
 \affiliation{%
 	\streetaddress{100 Northern Ave.}
 	\institution{Motional}
 	\city{Boston}
 	\state{MA}
 	\postcode{02210}
 }
 \email{emilio.frazzoli@motional.com}

 \author{Radboud Duintjer Tebbens}
 \affiliation{%
 	\streetaddress{100 Northern Ave.}
 	\institution{Motional}
 	\city{Boston}
 	\state{MA}
 	\postcode{02210}
 }
 \email{radboud.tebbens@motional.com}
 \author{Calin Belta}
 \affiliation{%
	\institution{Motional}
	\city{Boston}
 	\state{MA}
 	\postcode{02210}
}
 \email{calin.belta@motional.com}


 \renewcommand{\shortauthors}{Wei Xiao, Noushin Mehdipour, Anne Collin, Amitai Bin-Nun, Emilio Frazzoli, Radboud Duintjer Tebbens, Calin Belta}

\begin{abstract}

We develop optimal control strategies for Autonomous Vehicles (AVs) that are required to meet complex specifications imposed by traffic laws and cultural expectations of reasonable driving behavior. We formulate these specifications as rules, and specify their priorities by constructing a priority structure.  
We propose a recursive framework, in which the satisfaction of the rules in the priority structure are iteratively relaxed based on their priorities. Central to this framework is an optimal control problem, where convergence to desired states is achieved using Control Lyapunov Functions (CLFs), and safety is enforced through Control Barrier Functions (CBFs). We also show how the proposed framework can be used for after-the-fact, pass / fail evaluation of trajectories - a given trajectory is rejected if we can find a controller producing a trajectory that leads to less violation of the rule priority structure. We present case studies with multiple driving scenarios to demonstrate the effectiveness of the proposed framework.

\end{abstract}

\begin{CCSXML}
	<ccs2012>
	<concept>
	<concept_id>10010520.10010553.10010554.10010556</concept_id>
	<concept_desc>Computer systems organization~Robotic control</concept_desc>
	<concept_significance>500</concept_significance>
	</concept>
	<concept>
	<concept_id>10010583.10010750.10010769</concept_id>
	<concept_desc>Hardware~Safety critical systems</concept_desc>
	<concept_significance>300</concept_significance>
	</concept>
	<concept>
	<concept_id>10010147.10010178.10010213.10010214</concept_id>
	<concept_desc>Computing methodologies~Computational control theory</concept_desc>
	<concept_significance>100</concept_significance>
	</concept>
	</ccs2012>
\end{CCSXML}

\ccsdesc[500]{Computer systems organization~Robotic control}
\ccsdesc[300]{Hardware~Safety critical systems}
\ccsdesc[100]{Computing methodologies~Computational control theory}

\keywords{Autonomous driving, Lyapunov methods, Safety, Priority Structure}


\maketitle

\section{Introduction}
\label{sec:intro}
With the development and integration of cyber physical and safety critical systems in various engineering disciplines, there is an increasing need for computational tools for verification and control of such systems according to rich and complex specifications. A prominent example is 
autonomous driving, which received a lot of attention during the last decade. 
Besides common objectives in optimal control problems, such as minimizing the energy consumption and travel time, and constraints on control variables, such as maximum acceleration, autonomous vehicles (AVs) should follow complex and possibly conflicting traffic laws with different priorities. They should also meet cultural expectations of reasonable driving behavior \cite{nolte2017towards,shalev2017formal,parseh2019pre,ulbrich2013probabilistic,qian2014priority,iso2019pas,Collin2020}. For example, an AV has to avoid collisions with other road users (high priority), drive faster than the minimum speed limit (low priority), and maintain longitudinal clearance with the lead car (medium priority). We formulate these behavior specifications as a set of rules with a priority structure that captures their importance \cite{Censi2020}. 

To accommodate the rules, we formulate the AV control problem as 
an optimal control problem, in which the satisfaction of the rules and some vehicle limitations are enforced by Control Barrier Functions (CBF) \cite{Ames2017}, and convergence to desired states is achieved through Control Lyapunov Functions \cite{Freeman1996}. 
To minimize violation of the set of rules, we formulate iterative rule relaxation according to the pre-order on the rules.

Control Lyapunov Functions (CLFs) \cite{Freeman1996,Artstein1983} have been used to stabilize systems to desired states. CBFs enforce set forward-invariance \cite{Tee2009,Wisniewski2013}, and have been adopted to guarantee the satisfaction of safety requirements \cite{Ames2017,wang2017safety,Lindemann2018}. In \cite{Ames2017,Glotfelter2017}, the constraints induced by
CBFs and CLFs were used to formulate quadratic programs (QPs) that could be solved in real time to stabilize affine control systems while optimizing quadratic costs and satisfying state and control constraints. The main limitation of this approach is that the resulting QPs can easily become infeasible, especially when bounds on control inputs are imposed in addition to the safety specifications and the state constraints, or for constraints with high relative degree \cite{Xiao2019}. Relaxations of the (hard) CLF \cite{Aaron2012,Ames2017}
and CBF \cite{Xiao2019} constraints 
have been proposed to address this issue. 

The approaches described above do not consider the (relative) importance of the safety constraints during their relaxations. With particular relevance to the application considered here, AVs often deal with situations where there are conflicts among some of the traffic laws or other requirements. For instance, consider a scenario where a pedestrian walks to the lane in which the AV is driving - it is impossible for the AV to avoid a collision with the pedestrian or another vehicles, stay in lane, and drive faster than the minimum speed limit at the same time. 
Given the relative priorities of these specifications, a reasonable AV behavior would be to avoid a collision with the pedestrian or other vehicles (high priority), and instead violate low or medium priority rules, e.g., by reducing speed to a value lower than the minimum speed limit, and/or deviating from its lane. The maximum satisfaction and minimum violation of a set of rules expressed in temporal logic were studied in \cite{dimitrova2018maximum,tuumova2013minimum} and solved by assigning positive numerical weights to formulas based on their priorities \cite{tuumova2013minimum}. In \cite{Censi2020}, the authors proposed \emph{rulebooks}, a framework in which relative priorities were captured by a pre-order. In conjunction with rule violation scores, rulebooks were used to rank vehicle trajectories. These works do not consider the vehicle dynamics, or assume very simple forms, such as finite transition systems. The violation scores are example - specific, or are simply the quantitative semantics of the logic used to formulate the rules. In their current form, they capture worst case scenarios and are non-differentiable, and   
cannot be used for generating controllers for realistic vehicle dynamics. 

In this paper, we draw inspiration from Signal Temporal Logic (STL) \cite{Maler2004} to formalize traffic laws and other driving rules and to quantify the degree of violation of the rules by AV trajectories. We build on the rulebooks from \cite{Censi2020} to construct a rule priority structure. The main contribution of this paper is an iterative procedure that uses the rule priority to determine a control strategy that minimizes rule violation globally. We show how this procedure can be adapted to develop transparent and reproducible rule-based pass/fail evaluation of AV trajectories in test scenarios. 
Central to these approaches is an optimization problem based on \cite{Xiao2019}, which uses detailed vehicle dynamics, ensures the satisfaction of ``hard" vehicle limitations (e.g., acceleration constraints), and can accommodate rule constraints with high relative degree. Another key contribution of this work is the formal definition of a speed dependent, optimal over-approximation of a vehicle footprint that ensures differentiability of clearance-type rules, which enables the use of powerful approaches based on CBF and CLF. Finally, we use and test the proposed architecture and algorithms were implemented in a user-friendly software tool in various driving scenarios.

\section{Preliminaries}
\label{sec:pre}

\subsection{Vehicle Dynamics}\label{sec:vd}
Consider an affine control system given by:\vspace{-3pt}
\begin{equation}\label{eqn:affine}
\dot{\bm{x}}=f(\bm x)+g(\bm x)\bm u, 
\vspace{-3pt}
\end{equation}
where $\bm x\in X\subset\mathbb{R}^{n}$ ($X$ is the state constraint set), $\dot{()}$ denotes differentiation with respect to time, 
$f:\mathbb{R}^{n}\rightarrow\mathbb{R}^{n}$
and $g:\mathbb{R}^{n}\rightarrow\mathbb{R}^{n\times q}$ are globally
Lipschitz, and $\bm u\in U\subset\mathbb{R}^{q}$, where $U$ is the control constraint set
defined as:
\begin{equation}
U:=\{\bm u\in\mathbb{R}^{q}:\bm u_{min}\leq\bm u\leq\bm u_{max}\},
\label{eqn:control}%
\end{equation}
with $\bm u_{min},\bm u_{max}\in\mathbb{R}^{q}$, and the inequalities are
interpreted componentwise. We use $\bm{x}(t)$ to refer to a trajectory of (\ref{eqn:affine}) at a specific time $t$, and we use $\mathcal{X}$ to denote a whole trajectory starting at time 0 and ending at a final time specified by a scenario.  
Note that most vehicle dynamics, such as ``traditional" dynamics defined with respect to an inertial frame \cite{Ames2017} and dynamics defined along a given reference trajectory \cite{Rucco2015} (see (\ref{eqn:vehicle})) are in the form (\ref{eqn:affine}). Throughout the paper, we will refer to the vehicle with dynamics given by (\ref{eqn:affine}) as {\em ego}.

\begin{definition}
\label{def:forwardinv}(\textit{Forward invariance} \cite{Nguyen2016}) A set $C\subset\mathbb{R}^{n}$ is forward invariant for
system (\ref{eqn:affine}) if $\bm x(0) \in C$ implies $\bm x(t)\in C,$ $\forall t\geq0$.
\end{definition}

\begin{definition}
\label{def:relative} (\textit{Relative degree} \cite{Nguyen2016}) The relative degree of a
(sufficiently many times) differentiable function $b:\mathbb{R}^{n}%
\rightarrow\mathbb{R}$ with respect to system (\ref{eqn:affine}) is the number
of times it needs to be differentiated along its dynamics (Lie derivatives) until the control
$\bm u$ explicitly shows in the corresponding derivative.
\end{definition}

In this paper, since function $b$ is used to define a constraint $b(\bm
x)\geq0$, we will also refer to the relative degree of $b$ as the relative
degree of the constraint. 

\subsection{High Order Control Barrier Functions}
\begin{definition}
\label{def:classk} (\textit{Class $\mathcal{K}$ function} \cite{Khalil2002}) A
continuous function $\alpha:[0,a)\rightarrow[0,\infty), a > 0$ is said to
belong to class $\mathcal{K}$ if it is strictly increasing and $\alpha(0)=0$.
\end{definition}
Given $b:\mathbb{R}^{n}\rightarrow\mathbb{R}$ and a constraint $b(\bm x)\geq0$ with relative
degree $m$, we define $\psi_{0}(\bm
x):=b(\bm x)$ and a sequence of functions $\psi_{i}:\mathbb{R}%
^{n}\rightarrow\mathbb{R},i\in\{1,\dots,m\}$:
\vspace{-2pt}
\begin{equation}
\begin{aligned} \psi_i(\bm x) := \dot \psi_{i-1}(\bm x) + \alpha_i(\psi_{i-1}(\bm x)),i\in\{1,\dots,m\}, \end{aligned} \label{eqn:functions}%
\end{equation}
where $\alpha_{i}(\cdot),i\in\{1,\dots,m\}$ denotes a $(m-i)^{th}$ order
differentiable class $\mathcal{K}$ function. We further define a sequence of sets $C_{i}, i\in\{1,\dots,m\}$ associated
with (\ref{eqn:functions}) in the following form:
\begin{equation}
\label{eqn:sets}\begin{aligned} C_i := \{\bm x \in \mathbb{R}^n: \psi_{i-1}(\bm x) \geq 0\}, i\in\{1,\dots,m\}. \end{aligned}
\end{equation}

\begin{definition}
\label{def:hocbf} (\textit{High Order Control Barrier Function (HOCBF)}
\cite{Xiao2019}) Let $C_{1}, \dots, C_{m}$ be defined by (\ref{eqn:sets}%
) and $\psi_{1}(\bm x), \dots, \psi_{m}(\bm x)$ be defined by
(\ref{eqn:functions}). A function $b: \mathbb{R}^{n}\rightarrow\mathbb{R}$ is
a High Order Control Barrier Function (HOCBF) of relative degree $m$ for
system (\ref{eqn:affine}) if there exist $(m-i)^{th}$ order differentiable
class $\mathcal{K}$ functions $\alpha_{i},i\in\{1,\dots,m-1\}$ and a class
$\mathcal{K}$ function $\alpha_{m}$ such that 
\begin{equation}
\label{eqn:constraint}\begin{aligned} \sup_{\bm u\in U}[L_f^{m}b(\bm x) + L_gL_f^{m-1}b(\bm x)\bm u + S(b(\bm x)) \\+ \alpha_m(\psi_{m-1}(\bm x))] \geq 0, \end{aligned}
\end{equation}
for all $\bm x\in C_{1} \cap,\dots, \cap C_{m}$. 
$L_{f}^{m}$ ($L_{g}$) denotes Lie derivatives along
$f$ ($g$) $m$ (one) times, and $S(\cdot)$ denotes the remaining Lie derivatives
along $f$ with degree less than or equal to $m-1$ (see \cite{Xiao2019} for more details).
\end{definition}

The HOCBF is a general form of the relative degree $1$ CBF \cite{Ames2017},
\cite{Glotfelter2017}, \cite{Lindemann2018} (setting $m=1$ reduces the HOCBF to
the common CBF form in \cite{Ames2017}, \cite{Glotfelter2017}, \cite{Lindemann2018}), and is also a general form of the exponential CBF
\cite{Nguyen2016}. 

\begin{theorem}
\label{thm:hocbf} (\cite{Xiao2019}) Given a HOCBF $b(\bm x)$ from Def.
\ref{def:hocbf} with the associated sets $C_{1}, \dots, C_{m}$ defined
by (\ref{eqn:sets}), if $\bm x(0) \in C_{1} \cap,\dots,\cap C_{m}$,
then any Lipschitz continuous controller $\bm u(t)$ that satisfies
(\ref{eqn:constraint}) $\forall t\geq0$ renders $C_{1}\cap,\dots,
\cap C_{m}$ forward invariant for system (\ref{eqn:affine}).
\end{theorem}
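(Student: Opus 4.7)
The plan is to prove forward invariance of $C_1\cap\cdots\cap C_m$ by a backward induction on the index $i$, starting from $\psi_m$ and working down to $\psi_0=b$, using a standard comparison/Nagumo-type argument at each step. The crucial observation is that the HOCBF inequality (\ref{eqn:constraint}) is nothing but the condition $\psi_m(\bm x)\geq 0$ rewritten: indeed, by Definition \ref{def:relative} the control $\bm u$ first appears in the $m$-th Lie derivative, so $\dot\psi_{m-1}(\bm x)=L_f^m b(\bm x)+L_gL_f^{m-1}b(\bm x)\,\bm u+S(b(\bm x))$, and adding $\alpha_m(\psi_{m-1}(\bm x))$ reproduces exactly the bracket in (\ref{eqn:constraint}). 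Hence any Lipschitz controller satisfying (\ref{eqn:constraint}) yields, along the closed-loop trajectory, the differential inequality $\dot\psi_{m-1}(\bm x(t))\geq -\alpha_m(\psi_{m-1}(\bm x(t)))$ for all $t\geq 0$.

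Next, I would establish the base step: show that $\psi_{m-1}(\bm x(t))\geq 0$ for all $t\geq 0$. Since $\bm x(0)\in C_m$ gives $\psi_{m-1}(\bm x(0))\geq 0$, and $\alpha_m$ is class $\mathcal{K}$ (so $\alpha_m(0)=0$ and $\alpha_m(s)>0$ for $s>0$), the inequality above ensures that whenever $\psi_{m-1}=0$ one has $\dot\psi_{m-1}\geq 0$. A standard comparison lemma (or Nagumo's theorem applied to the sublevel set $\{\psi_{m-1}\geq 0\}$) then rules out a first time at which $\psi_{m-1}$ becomes negative, giving forward invariance of $C_m$.

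The inductive step is then immediate and repeats the same reasoning: assuming $\psi_i(\bm x(t))\geq 0$ for all $t\geq 0$, the defining relation (\ref{eqn:functions}) for index $i$ reads $\dot\psi_{i-1}(\bm x(t))+\alpha_i(\psi_{i-1}(\bm x(t)))=\psi_i(\bm x(t))\geq 0$, i.e., $\dot\psi_{i-1}\geq -\alpha_i(\psi_{i-1})$. Combined with $\psi_{i-1}(\bm x(0))\geq 0$ (from $\bm x(0)\in C_i$) and the class $\mathcal{K}$ property of $\alpha_i$, the same comparison argument yields $\psi_{i-1}(\bm x(t))\geq 0$ for all $t\geq 0$, that is, $\bm x(t)\in C_i$ for all $t\geq 0$. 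Iterating from $i=m$ down to $i=1$ gives $\bm x(t)\in C_1\cap\cdots\cap C_m$ for all $t\geq 0$, which is the claim.

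The main obstacle — and the only place where care is really needed — is the comparison step: one must justify that a locally Lipschitz solution of $\dot y\geq -\alpha(y)$ with $y(0)\geq 0$ and $\alpha$ class $\mathcal{K}$ cannot leave $[0,\infty)$. The cleanest justification is to compare with the trivial solution $y\equiv 0$ of $\dot z=-\alpha(z)$ via the standard differential-inequality/comparison lemma for Carathéodory solutions; this requires $\alpha_i$ to be at least locally Lipschitz, which is consistent with the $(m-i)$-th order differentiability assumed in Definition \ref{def:hocbf}. Regularity of $\psi_{i}$ along trajectories (so that derivatives exist almost everywhere) follows from the assumed smoothness of $b$, $f$, $g$, the $\alpha_i$'s, and Lipschitz continuity of $\bm u(t)$. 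Once this technical point is in place, the rest of the proof is a direct induction.
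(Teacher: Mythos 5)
The paper does not prove this theorem --- it is imported verbatim from the cited reference [Xiao2019] --- so there is no in-paper proof to compare against. Your argument is correct and is essentially the standard proof given in that reference: identify the bracket in (\ref{eqn:constraint}) with $\dot\psi_{m-1}+\alpha_m(\psi_{m-1})$, apply the comparison lemma (Lemma 4.4 in Khalil, or Nagumo) to keep $\psi_{m-1}\geq 0$, and then cascade the same argument down through $\psi_{m-2},\dots,\psi_0=b$ using the recursion (\ref{eqn:functions}) and the initial conditions $\bm x(0)\in C_1\cap\cdots\cap C_m$; you also correctly flag the only delicate point, namely the local Lipschitz regularity of the $\alpha_i$ needed for the comparison step.
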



\begin{definition}\label{def:clf} \textit{(Control Lyapunov Function (CLF) \cite{Aaron2012})} 
A continuously differentiable function $V :\mathbb{R}^{n}\rightarrow\mathbb{R}_{\ge0}$ is an exponentially stabilizing control Lyapunov function (CLF) if there exist positive constants $c_1 >0, c_2 > 0, c_3 > 0$ such that $\forall 
\bm x\in X$, $c_{1}||\bm x||^{2} \leq V(\bm x)
\leq c_{2} ||\bm x||^{2}$, the following holds:
\begin{equation}\label{eqn:CLF}
\inf_{\bm u\in U} \lbrack L_{f}V(\bm x)+L_{g}V(\bm x)
\bm u + c_{3}V(\bm x)\rbrack\leq0.
\end{equation}
\end{definition} 

\begin{theorem} [\cite{Aaron2012}] \label{thm:clf}
	 Given a CLF as in Def. \ref{def:clf}, any Lipschitz continuous controller $ \bm u(t),\forall t\geq 0$ that satisfies (\ref{eqn:CLF}) 
	exponentially stabilizes system (\ref{eqn:affine}) to the origin.
\end{theorem}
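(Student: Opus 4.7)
The plan is to show that along any closed-loop trajectory, the CLF decays exponentially, and then transfer that decay to the state norm by means of the quadratic sandwich bound in Def.~\ref{def:clf}. First I would fix a Lipschitz continuous controller $\bm u(t)$ satisfying (\ref{eqn:CLF}) and observe that, because $f, g$ are globally Lipschitz and $\bm u$ is Lipschitz in $t$, the right-hand side of (\ref{eqn:affine}) is Lipschitz in $\bm x$ and continuous in $t$, so a unique trajectory $\bm x(t)$ exists. Along this trajectory, $V$ is continuously differentiable and
\begin{equation*}
\dot V(\bm x(t)) = L_f V(\bm x(t)) + L_g V(\bm x(t))\,\bm u(t),
\end{equation*}
so evaluating (\ref{eqn:CLF}) at $\bm u(t)$ (which lies in $U$ and therefore attains a value no larger than the infimum-plus-$\varepsilon$, but here is assumed to satisfy the inequality directly) yields the key differential inequality $\dot V(\bm x(t)) \le -c_3\, V(\bm x(t))$ for all $t\ge 0$.

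Next I would apply the comparison lemma (or equivalently, multiply by the integrating factor $e^{c_3 t}$ and integrate) to deduce
\begin{equation*}
V(\bm x(t)) \le V(\bm x(0))\, e^{-c_3 t}, \quad \forall t\ge 0.
\end{equation*}
Combining this with the two-sided quadratic bound $c_1 \|\bm x\|^2 \le V(\bm x)\le c_2\|\bm x\|^2$ from Def.~\ref{def:clf} gives
\begin{equation*}
c_1 \|\bm x(t)\|^2 \le V(\bm x(t)) \le V(\bm x(0))\, e^{-c_3 t} \le c_2 \|\bm x(0)\|^2 \, e^{-c_3 t},
\end{equation*}
so that $\|\bm x(t)\| \le \sqrt{c_2/c_1}\,\|\bm x(0)\|\, e^{-c_3 t/2}$ for all $t\ge 0$, which is the definition of (global) exponential stability of the origin, with rate $c_3/2$ and overshoot constant $\sqrt{c_2/c_1}$.

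The calculations are essentially routine; the only point that requires a bit of care is making sure that the feedback $\bm u(t)$ that one plugs into $\dot V$ actually achieves the inequality in (\ref{eqn:CLF}), which is an infimum over $U$. Since the hypothesis is that $\bm u(t)$ is a Lipschitz continuous selection satisfying (\ref{eqn:CLF}) pointwise in $t$, this is immediate, but it is worth noting explicitly so that the argument does not silently depend on the existence of a minimizer; the existence of such a Lipschitz selection is a separate question (addressed, e.g., by Sontag's universal formula) and is not needed for this theorem. Once that subtlety is dispatched, the comparison-lemma step and the quadratic sandwich finish the proof cleanly.
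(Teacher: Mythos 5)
Your proof is correct, and it is the standard argument for this classical result: the paper itself gives no proof (the theorem is quoted from the cited reference), and your route --- plugging the controller into $\dot V$ to get $\dot V \leq -c_3 V$, applying the comparison lemma, and transferring the exponential decay of $V$ to $\|\bm x\|$ via the quadratic sandwich $c_1\|\bm x\|^2 \leq V(\bm x) \leq c_2\|\bm x\|^2$ --- is exactly the one used there. Your explicit remark that ``satisfies (\ref{eqn:CLF})'' must be read as the controller achieving the pointwise inequality (not merely the existence of the infimum) is a worthwhile clarification and does not change the argument.
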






Recent works \cite{Ames2017},\cite{Lindemann2018},\cite{Nguyen2016} combined
CBFs and CLFs with quadratic costs to formulate an optimization problem that stabilized a system using CLFs subject to safety constraints given by CBFs. In this work, we follow a similar approach. Time is discretized and CBFs and CLFs constraints are considered at each discrete time step. Note that these constraints are linear in control since the state value is fixed at the beginning of the discretization interval. Therefore, in every interval, the optimization problem is a QP 
. The optimal control obtained by solving each QP is applied at the current time step and held constant for the whole interval. The next state is found by integrating the dynamics (\ref{eqn:affine}). The usefulness of this approach is conditioned upon the feasibility of the QP at every time step. In the case of constraints with high relative degrees, which are common in autonomous driving, the CBFs can be replaced by HOCBFs. 

\subsection{Rulebooks}
\label{sub-sec:rulebooks}

As defined in \cite{Censi2020}, 
a {\em rule} specifies a desired behavior for autonomous vehicles. Rules can be derived from traffic laws, local culture, or consumer expectation, e.g., ``stay in lane for all times", ``maintain clearance from pedestrians for all times", ``obey the maximum speed limit for all times", ``reach the goal". A \textit{rulebook} as introduced in \cite{Censi2020} defines a priority on rules by imposing a pre-order that can be used to rank AV trajectories:

\begin{definition} \label{def:rb}(Rulebook \cite{Censi2020})
A rulebook is a tuple $\langle R,\leq\rangle$, where $R$ is a finite set of rules and $\leq$ is a pre-order on $R$.
\end{definition}

A rulebook can be represented by a directed graph, where each node is a rule and an edge between two rules means that the first rule has higher priority than the second. Formally, $r_1\rightarrow r_2$ in the graph means that $r_1\leq r_2$ ($r_2 \in R$ has a higher priority than $r_1\in R$). Note that, using a pre-order, two rules can be in one of three relations: comparable (one has a higher priority than the other), incomparable, or equivalent (each has a higher priority than the other).  
\vspace{-4pt}
\begin{example}\label{exm:pre-order}
Consider the rulebook shown in Fig. \ref{fig:rb}, which consists of 6 rules. In this example, $r_1$ and $r_2$ are incomparable, and both have a higher priority than $r_3$ and $r_4$. Rules $r_3$ and $r_4$ are equivalent ($r_3\leq r_4$ and $r_4\leq r_3$), but are incomparable to $r_5$. Rule $r_6$ has the lowest priority among all rules. 
\vspace{-4pt}
\begin{figure}[ptbh]
\centering
\includegraphics[scale=0.3]{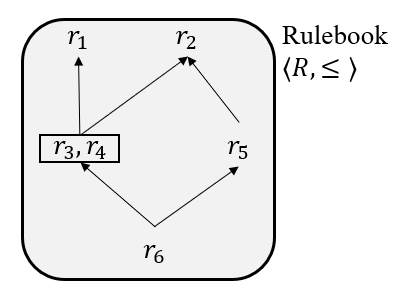}
\vspace{-4pt} \caption{Graphical representation of a rulebook $\langle R,\leq\rangle$. 
}%
\label{fig:rb}%
\end{figure}
\end{example}
\vspace{-4pt}
Rules are evaluated over vehicle trajectories (i.e., trajectories of system (\ref{eqn:affine})). A {\em violation metric} is a function specific to a rule that takes as input a trajectory and outputs a \emph{violation score} that captures the degree of violation of the rule by the trajectory \cite{Censi2020}. For example, if the AV crosses the lane divider and  reaches within the left lane by a maximum distance of 1m along a trajectory, then the violation score for that trajectory against the ``stay in lane for all times" rule can be 1m. 


\section{Problem Formulation}
\label{sec:prob}

For a vehicle with dynamics given by (\ref{eqn:affine}) and starting at a given state ${\bm x}(0)=\bm x_0$, consider an optimal control problem in the form:
\vspace{-3pt}
\begin{equation}\label{eqn:gcost}
\min_{\bm u(t)} \int_{0}^{T}J(||\bm u(t)||)dt,
\end{equation}
where $||\cdot||$ denotes the 2-norm of a vector, $T > 0$ denotes a bounded final time, and $J$ is a strictly increasing function of its argument (e.g., an energy consumption function $J(||\bm u(t)||) = ||\bm u(t)||^2$). We consider the following additional requirements:

\textbf{Trajectory tracking}: We require the vehicle to stay as close as possible to a desired {\em reference trajectory} $\mathcal{X}_r$ (e.g., middle of its current lane).

\textbf{State constraints}: We impose a set of constraints (componentwise) on the state of system (\ref{eqn:affine}) in the following form:
\begin{equation}\label{eqn:state}
\bm x_{min} \leq \bm x(t)\leq \bm x_{max}, \forall t\in[0,T],
\end{equation}
where $\bm x_{max}: = (x_{max,1},x_{max,2},\dots,x_{max,n})\in \mathbb{R}^n$ and $\bm x_{min}: = (x_{min,1},x_{min,2},\dots,x_{min,n})\in \mathbb{R}^n$ denote the maximum and minimum state vectors, respectively. Examples of such constraints for a vehicle include maximum acceleration, maximum braking, and maximum steering rate.  

\textbf{Priority structure:}  We require the system trajectory $\mathcal{X}$ of (\ref{eqn:affine}) starting at $\bm x(0)=\bm x_0$ to satisfy a priority structure $\langle R,\sim_p,\leq_p\rangle$, i.e.:
\begin{equation}\label{eqn:rulebook-sat}
\mathcal{X}\models \langle R,\sim_p,\leq_p\rangle, 
\end{equation}
where $\sim_p$ is an equivalence relation over a finite set of rules $R$ and $\leq_p$ is a total order over the equivalence classes.
Our priority structure is related to the rulebook from Sec. \ref{sub-sec:rulebooks}, but it requires that any two rules from $R$ are either comparable or equivalent (see Sec. \ref{sec:prio-struc} for a formal definition).  Informally, this means that $\mathcal{X}$ is the ``best" trajectory that (\ref{eqn:affine}) can produce, considering the violation metrics of the rules in $R$ and the priorities captured by $\sim_p$ and $\leq_p$. 
A formal definition for a priority structure and its satisfaction will be given in Sec. \ref{sec:prio-struc}. 

\textbf{Control bounds}: We impose control bounds as given in (\ref{eqn:control}). Examples include jerk and steering acceleration.  

Formally, we can define the optimal control problem as follows:

\vspace{1mm}
\begin{problem}\label{prob:main}
	Find a control policy for system (\ref{eqn:affine}) such that the objective function in (\ref{eqn:gcost}) is minimized, and the trajectory tracking, state constraints (\ref{eqn:state}), priority structure $\langle R,\sim_p,\leq_p\rangle$, and control bounds (\ref{eqn:control}) are satisfied by the generated trajectory given $\bm x(0)$. 
\end{problem}

Our approach to Problem \ref{prob:main} can be summarized as follows: We use CLFs for tracking the reference trajectory $\mathcal{X}_r$ and HOCBFs to implement the state constraints (\ref{eqn:state}). For each rule in $R$, we define violation metrics. We show that satisfaction of the rules can be written as forward invariance for sets described by differential functions, and enforce them using HOCBFs.  The control bounds (\ref{eqn:control}) are considered as constraints. We provide an iterative solution to Problem \ref{prob:main}, where each iteration involves solving a sequence of QPs. In the first iteration, all the rules from $R$ are considered. If the corresponding QPs are feasible, then an optimal control is found. Otherwise, we iteratively relax the satisfaction of rules from subsets of $R$ based on their priorities, and minimize the corresponding relaxations by including them in the cost function.

\section{Rules and priority structures}
\label{sec:trb}

In this section, we extend the rulebooks from \cite{Censi2020} by formalizing the rules and defining violation metrics. We introduce a {\em priority structure}, in which all rules are comparable, and it is particularly suited for the hierarchical control framework proposed in Sec. \ref{sec:optim-rule-approx}. 
\subsection{Rules} 
In the definition below, an {\em instance} $i\in S_p$ is a traffic participant or artifact that is involved in a rule, where $S_p$ is the set of all instances involved in the rule. For example, in a rule to maintain clearance from pedestrians, a pedestrian is an instance, and there can be many instances encountered by ego in a given scenario. Instances can also be traffic artifacts like the road boundary (of which there is only one), lane boundaries, or stop lines. 

\begin{definition} (Rule)\label{def:rule}
A rule is composed of a statement and three violation metrics.
A statement is a formula that is required to be satisfied for all times. A formula is inductively defined as:
\begin{equation} \label{eqn:task}
\varphi := \mu\vert \neg \varphi \vert \varphi_1\wedge \varphi_2,
\end{equation}
where $\varphi,\varphi_1,\varphi_2$ are formulas, $\mu :=(h(\bm x)\geq 0)$ is a predicate on the state vector $\bm x$ of system (\ref{eqn:affine}) with $h:\mathbb{R}^n\rightarrow \mathbb{R}$. $\wedge, \neg$ are Boolean operators for conjunction and negation, respectively. The three violation metrics for a rule $r$ are defined as:
 \begin{enumerate}
     \item instantaneous violation metric $\varrho_{r,i}(\bm x(t)) \in [0,1],$
     \item instance violation metric $\rho_{r,i}(\mathcal{X})\in [0,1]$, and
     \item total violation metric $P_{r}(\mathcal{X})\in [0,1]$,
 \end{enumerate}
  where $i$ is an instance, $\bm{x}(t)$ is a trajectory at time $t$ and $\mathcal{X}$ is a whole trajectory of ego. 
 The instantaneous violation metric $\varrho_{r,i}(\bm x(t))$
 quantifies violation by a trajectory at a specific time $t$ with respect to a given instant $i$. The instance violation metric $\rho_{r,i}(\mathcal{X})$ captures violation with respect to a given instance $i$ over the whole time of a trajectory, and is obtained by aggregating $\varrho_{r,i}(\bm x(t))$ over the entire time of a trajectory $\mathcal{X}$. 
 The total violation metric $P_{r}$ is the aggregation of the instance violation metric $\rho_{r,i}(\mathcal{X})$ over all instances $i\in S_p$. 
 \end{definition}
 
 The aggregations in the above definitions can be implemented through selection of a maximum or a minimum, integration over time, summation over instances, or by using general $L_p$ norms. A zero value for a violation score shows satisfaction of the rule. A strictly positive value denotes violation - the larger the score, the more ego violates the rule. 
 Throughout the paper, for simplicity, we use $\varrho_{r}$ and $\rho_{r}$ instead of $\varrho_{r,i}$ and $\rho_{r,i}$ if there is only one instance. Examples of rules (statements and violations metrics and scores) are given in Sec. \ref{sec:case} and in the Appendix. 

We divide the set of rules into two categories: (1) {\em clearance rules} - safety relevant rules
enforcing that ego maintains a minimal distance to other traffic participants and to the side of the road or lane (2) {\em non-clearance rules} - rules that that are not contained in the first category, such as speed limit rules. In Sec. \ref{sec:rule-approx}, we provide a general methodology to express clearance rules as inequalities involving differentiable functions, which will allow us to enforce their satisfaction using HOCBFs. 

 \begin{remark}
 The violation metrics from Def. \ref{def:rule} are inspired from Signal Temporal Logic (STL) robustness \cite{Maler2004,donze,mehdipour2019agm}, which quantifies how a signal (trajectory) satisfies a temporal logic formula. In this paper, we focus on rules that we aim to satisfy for all times. Therefore, the rules in (\ref{eqn:task}) can be seen as (particular) STL formulas, which all start with an ``always" temporal operator (omitted here).  
 \end{remark}

\subsection{Priority Structure}
\label{sec:prio-struc}


The pre-order rulebook in Def. \ref{def:rb} defines a ``base" pre-order that captures relative priorities of some (comparable) rules, which are often similar in different states and countries. 
A pre-order rulebook can be made more precise for a specific legislation by adding rules and/or priority relations through priority refinement, rule aggregation and augmentation \cite{Censi2020}. This can be done through empirical studies or learning from local data to construct a total order rulebook. To order trajectories, authors of \cite{Censi2020} enumerated all the total orders compatible with a given pre-order. In this paper, motivated by the hierarchical control framework
described in Sec. \ref{sec:optim-rule-approx}, we require that any two rules are in a relationship, in the sense that they are either equivalent or comparable with respect to their priorities.

\begin{definition} [Priority Structure]
A priority structure is a tuple $\langle R,\sim_p,\leq_p\rangle$, where $R$ is a finite set of rules, $\sim_p$ is an equivalence relation over $R$, and $\leq_p$ is a total order over the set of equivalence classes determined by $\sim_p$. 
\end{definition}
Equivalent rules (i.e., rules in the same class) have the same priority. Given two equivalence classes $\mathcal{O}_1$ and $\mathcal{O}_2$ with $\mathcal{O}_1\leq_p \mathcal{O}_2$, every rule $r_1\in \mathcal{O}_1$ has lower priority than every rule $r_2\in \mathcal{O}_2$. Since $\leq_p$ is a total order, any two rules $r_1,r_2\in R$ are comparable, in the sense that exactly one of the following three statements is true: (1) $r_1$ and $r_2$ have the same priority, (2) $r_1$ has higher priority than $r_2$, and (3) $r_2$ has higher priority than $r_1$. 
Given a priority structure $\langle R,\sim_p,\leq_p\rangle$, we can assign numerical (integer) priorities to the rules. We assign priority 1 to the equivalence class with the lowest priority, priority 2 to the next one and so on. 
The rules inside an equivalence class inherit the priority from their equivalence class. Given a priority structure $\langle R,\sim_p,\leq_p\rangle$ and violation scores for the rules in $R$, we can compare trajectories:

\begin{definition}[Trajectory Comparison] \label{def:traj_cmp}
A trajectory $\mathcal{X}_1$ is said to be {\bf better} (less violating) than another trajectory $\mathcal{X}_2$
if the highest priority rule(s) violated by $\mathcal{X}_1$ has a lower priority than the highest priority rule(s) violated by $\mathcal{X}_2$. If both trajectories violate an equivalent highest priority rule(s), then the one with the smaller (maximum) total violation score is better. In this case, if the trajectories have equal violation scores, then they are equivalent.
\end{definition}
\vspace{-2pt}
It is easy to see that, by following Def. \ref{def:traj_cmp}, given two trajectories, one can be better than the other, or they can be equivalent (i.e., two trajectories cannot be incomparable). 

\begin{example}\label{ex:three-traj}
Consider the driving scenario from Fig. \ref{fig:autonomous1} and a priority structure $\langle R,\sim_p,\leq_p\rangle$ in Fig. \ref{fig:rulebook1}, where $R = \{r_1, r_2, r_3, r_4\}$,  and $r_1$: ``No collision'', $r_2$: ``Lane keeping'', $r_3$: ``Speed limit'' and $r_4$: ``Comfort''. There are 3 equivalence classes given by $\mathcal{O}_1=\{r_4\}$, $\mathcal{O}_2=\{r_2,r_3\}$ and $\mathcal{O}_3=\{r_1\}$. Rule $r_4$ has priority 1, $r_2$ and $r_3$ have priority 2, and $r_1$ has priority 3. Assume the instance (same as total, as there is only one instance for each rule) violation scores of rule $r=1,2,3,4$ by trajectories $a,b,c$ are given by $\rho_r=(\rho_r(a),\rho_r(b),\rho_r(c))$ as shown in Fig. \ref{fig:rulebook1}. 
Based on Def. \ref{def:traj_cmp}, 
trajectory $c$ is better (less violating) than trajectory $a$ since the highest priority rule violated by $c$ ($r_2$) has a lower priority than the highest priority rule violated by $a$ ($r_1$). The same argument holds for trajectories $a$ and $b$, i.e., $b$ is better than $a$. The highest priority rules violated by trajectories $b$ and $c$ have the same priorities. Since the maximum violation score of the highest priority rules violated by $b$ is smaller than that for $c$, i.e., $\max(\rho_2(b),\rho_3(b))=0.35$, $\max(\rho_2(c),\rho_3(c))=0.4$, trajectory $b$ is better than $c$. 
\end{example}
\begin{definition} (Priority structure satisfaction) \label{def:rb_satisfy}
A trajectory $\mathcal{X}$ of system (\ref{eqn:affine}) starting at $\bm x(0)$
satisfies a priority structure
$\langle R,\sim_p,\leq_p\rangle$ (i.e., $\mathcal{X}\models \langle R,\sim_p,\leq_p\rangle$), if there are no better trajectories of (\ref{eqn:affine}) starting at $\bm x(0)$.
\end{definition}

Def. \ref{def:rb_satisfy} is central to our solution to Problem \ref{prob:main} (see Sec. \ref{sec:optim-rule-approx}), which is based on an iterative relaxation of the rules according to their satisfaction of the priority structure. 


\begin{figure}[htb]
	\centering
	\subfigure[Possible trajectories]{
		\begin{minipage}[t]{0.45\linewidth}
			\centering
			\includegraphics[scale=0.28]{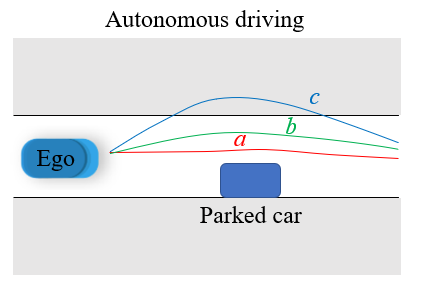} 
			\vspace{-1.8mm}
			\label{fig:autonomous1}%
		\end{minipage}%
	}	
	\subfigure[Priority structure with instance violation scores (the colors for the scores correspond to the colors of the trajectories. The rectangles show the equivalence classes.]{
		\begin{minipage}[t]{0.45\linewidth}
			\centering
			\includegraphics[scale=0.23]{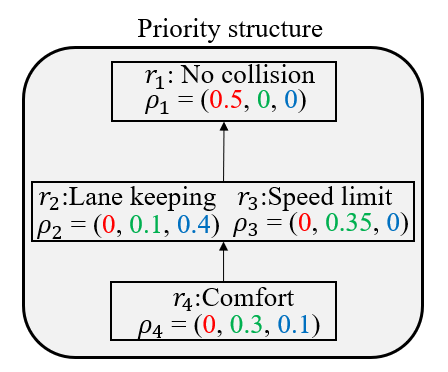} 
			\label{fig:rulebook1}%
		\end{minipage}%
	}	
	\centering
	\caption{An autonomous driving scenario with three possible trajectories, 4 rules, and 3 equivalence classes}
\end{figure} 

\section{RULE-BASED OPTIMAL CONTROL}
\label{sec:oc}
In this section, we present our approach to solve Problem \ref{prob:main}. 

\subsection{Trajectory Tracking}\label{sec:tracking}

As discussed in Sec. \ref{sec:vd}, Eqn. (\ref{eqn:affine}) can define ``traditional" vehicle dynamics with respect to an inertial reference frame
\cite{Ames2017}, or dynamics defined along a given reference trajectory \cite{Rucco2015} (see (\ref{eqn:vehicle})). The case study considered in this paper falls in the second category (the middle of ego's current lane is the default reference trajectory). We use the model from \cite{Rucco2015}, in which part of the state of (\ref{eqn:affine}) captures the tracking errors with respect to the reference trajectory. 
The tracking problem then becomes stabilizing the error states to 0. Suppose the error state vector is $\bm y\in{R}^{n_0}, n_0 \leq n$ (the components in $\bm y$ are part of the components in $\bm x$). We define a CLF $V(\bm x) = ||\bm y||^2$ ($c_3 = \epsilon > 0$ in Def. \ref{def:clf}). Any control $\bm u$ that satisfies the relaxed CLF constraint \cite{Ames2017} given by:
\begin{equation} \label{eqn:clf1}
     L_{f}V(\bm x)+L_{g}V(\bm x)
\bm u + \epsilon V(\bm x)\leq \delta_e,
\end{equation} 
exponentially stabilizes the error states to 0 if $\delta_e(t) = 0, \forall t\in[0,T]$, where $\delta_e>0$ is a relaxation variable that compromises between stabilization and feasibility. Note that the CLF constraint (\ref{eqn:clf1}) only works for $V(\bm x)$ with relative degree one. If the relative degree is larger than $1$, we can use input-to-state linearization and state feedback control \cite{Khalil2002} to reduce the relative degree to one \cite{Xiao2020}.

\subsection{Clearance and Optimal Disk Coverage} 
\label{sec:rule-approx}
Satisfaction of a priority structure can be enforced by formulating real-time constraints on ego state $\bm x(t)$ that appear in the violation metrics. Satisfaction of the non-clearance rules can be easily implemented using HOCBFs (See Sec. \ref{sec:optim-rule-approx}, Sec. \ref{sec:app-rule-def}). For clearance rules, we define a notion of clearance region around ego and around the traffic participants in $S_p$ that are involved in the rule (e.g., pedestrians and other vehicles). 
The clearance region for ego is defined as a rectangle with tunable speed-dependent lengths (i.e., we may choose to have a larger clearance from pedestrians when ego is driving with higher speeds
) and defined based on ego footprint and functions $h_f(\bm x), h_b(\bm x), h_l(\bm x), h_r(\bm x)$ that determine the front, back, left, and right clearances as illustrated in Fig. \ref{fig:approx}, where $h_f,h_b,h_l,h_r:\mathbb{R}^n\rightarrow \mathbb{R}_{\geq0}$. The clearance regions for participants (instances) are defined such that they comply with their geometry and cover their footprints, e.g., (fixed-length) rectangles for other vehicles and (fixed-radius) disks for pedestrians, as shown in Fig. \ref{fig:approx}.

To satisfy a clearance rule involving traffic participants, we need to avoid any overlaps between the clearance regions of ego and traffic participants. 
We define a function $d_{min}(\bm x, \bm x_i): \mathbb{R}^{n+n_i} \rightarrow \mathbb{R}$ to determine the signed distance between the clearance regions of ego and participant $i\in S_p$ ($\bm x_i\in\mathbb{R}^{n_i}$ denotes the state of participant $i$), which is negative if the clearance regions overlap. Therefore, satisfaction of a clearance rule can be imposed by having a constraint on $d_{min}(\bm x, \bm x_i)$ to be non-negative. For the clearance rules ``stay in lane" and ``stay in drivable area", we require that ego clearance region be within the lane and the drivable area, respectively.


However, finding $d_{min}(\bm x, \bm x_i)$ can be computationally expensive. For example, the distance between two rectangles could be from corner to corner, corner to edge, or edge to edge. Since each rectangle has $4$ corners and $4$ edges, there are 64 possible cases. More importantly, this computation leads to a non-smooth $d_{min}(\bm x, \bm x_i)$ function, which cannot be used to enforce clearance using a CBF approach. To address these issues, we propose an optimal coverage of the rectangles with disks, which allows to map the satisfaction of the clearance rules to a set of smooth HOCBF constraints (i.e., there will be one constraint for each pair of centers of disks pertaining to different traffic participants). 

We use $l > 0$ and $w > 0$ to denote the length and width of ego's footprint, respectively. Assume we use $z \in\mathbb{N}$ disks with centers located on the center line of the clearance region to cover it (see Fig. \ref{fig:proof}). Since all the disks have the same radius, the minimum radius to fully cover ego's clearance region, denoted by $ \mathfrak{r}>0$, is given by:
\begin{equation}\label{eqn:radius}
    \mathfrak{r} = \sqrt{\left(\frac{w + h_l(\bm x) + h_r(\bm x)}{2} \right)^2 + \left(\frac{l+h_f(\bm x) + h_b(\bm x)}{2z}\right)^2}.
\end{equation}
The minimum radius $\mathfrak{r}_i$ of the rectangular clearance region for a traffic participant $i \in S_p$ with disks number $z_i$ is defined in a similar way using the length and width of its footprint and setting $h_l,h_r,h_b,h_f=0$.
\begin{figure}[thpb]
	\centering
	\includegraphics[scale=0.30]{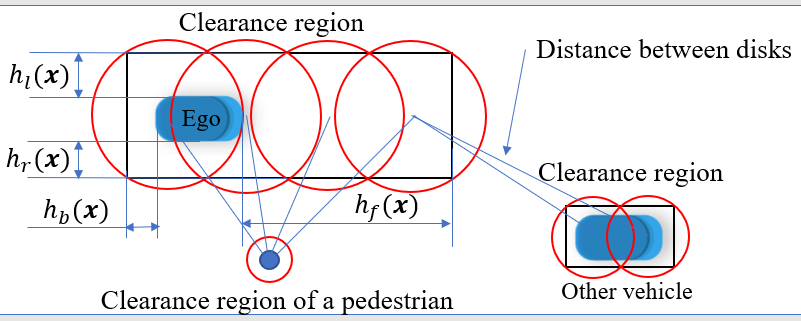}
	\caption{The clearance regions and their coverage with disks: the clearance region and the disks are speed dependent for ego and fixed for the other vehicle and the pedestrian. We consider the distances between all the possible pairs of disks from ego and other traffic participants (vehicles, pedestrians, etc.).}
	\label{fig:approx}%
\end{figure}
\begin{figure}[!bht]
	\centering
	\includegraphics[scale=0.30]{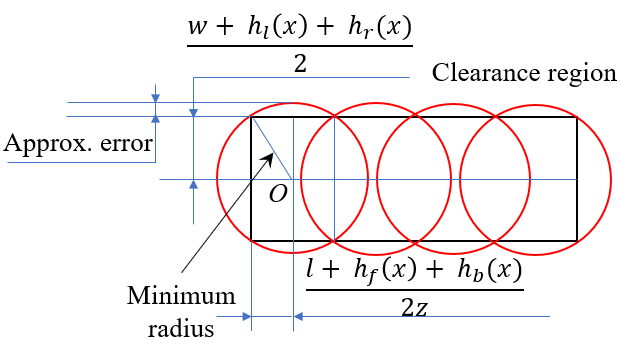}
	\vspace{-3pt}
	\caption{The optimal disk coverage of a clearance region.}
	\label{fig:proof}%
\end{figure}

Assume the center of the disk $j\in \{1,\dots,z\}$ for ego, and the center of the disk $k\in \{1,\dots,z_i\}$ for the instance $i \in S_p$ are given by $(x_{e,j}, y_{e,j}) \in \mathbb{R}^2$ and $(x_{i,k}, y_{i,k})\in \mathbb{R}^2$, respectively (See Appendix \ref{sec:app-coverage}). To avoid any overlap between the corresponding disks of ego and the instance $i\in S_p$, we impose the following constraints:
\begin{equation}\label{eqn:rule_cons}
\begin{aligned}
        \sqrt{(x_{e,j} - x_{i,k})^2 + (y_{e,j} - y_{i,k})^2} \geq   \mathfrak{r} + \mathfrak{r}_i ,\\ \forall j\in\{1,\dots,z\}, \forall k\in\{1,\dots,z_i\}. 
    \end{aligned}
\end{equation}
Since disks fully cover the clearance regions, enforcing \eqref{eqn:rule_cons} also guarantees that $d_{min}(\bm x, \bm x_i)\ge0$. For the clearance rules ``stay in lane" and ``stay in drivable area", we can get similar constraints as (\ref{eqn:rule_cons}) to make the disks that cover ego's clearance region stay within them (e.g., we can consider $h_l,h_r,h_b,h_f=0$ and formulate \eqref{eqn:rule_cons} such that the distance between ego disk centers and the line in the middle of ego's current lane be less than $\frac{w}{2} - \mathfrak{r}$). Thus, we can formulate satisfaction of all the clearance rules as continuously differentiable constraints (\ref{eqn:rule_cons}), and implement them using HOCBFs.

To efficiently formulate the proposed optimal disk coverage approach, we need to find the minimum number of the disks that fully cover the clearance regions as it determines the number of constraints in \eqref{eqn:rule_cons}. Moreover, we need to minimize the lateral approximation error since large errors imply overly conservative constraint (See Fig. \ref{fig:proof}). This can be formally defined as an optimization problem, and solved offline 
to determine the numbers and radii of the disks in \eqref{eqn:rule_cons} (the details are provided in Appendix \ref{sec:app-coverage}). 

\subsection{Optimal Control}
\label{sec:optim-rule-approx}

In this section, we present our complete framework to solve Problem \ref{prob:main}.  
We propose a recursive algorithm to iteratively relax the satisfaction of the rules in the priority structure $\langle R,\sim_p,\leq_p\rangle$ (if needed) based on the total order over the equivalence classes. 

Let $R_\mathcal{O}$ be the set of equivalence classes in $\langle R,\sim_p,\leq_p\rangle$, and $N_\mathcal{O}$ be the cardinality of $R_\mathcal{O}$. 
We construct the power set of equivalence classes denoted by $S = 2^{R_\mathcal{O}}$, and incrementally (from low to high priority) sort the sets in $S$ based on the highest priority of the equivalence classes in each set according to the total order and denote the sorted set by $S_{sorted} = \{S_1, S_2, \dots, S_{2^{N_\mathcal{O}}}\}$, where $S_1 =\{ \emptyset\}$. We use this sorted set in our optimal control formulation to obtain satisfaction of the higher priority classes, even at the cost of relaxing satisfaction of the lower priority classes. Therefore, from Def. \ref{def:rb_satisfy}, the solution of the optimal control will satisfy the priority structure. 
\begin{example}\label{exm:sorted}
Reconsider Exm. \ref{ex:three-traj}. 
We define $R_\mathcal{O} = \{ \mathcal{O}_1,\mathcal{O}_2,\mathcal{O}_3\}$. 
Based on the given total order $\mathcal{O}_1\leq_p \mathcal{O}_2 \leq_p \mathcal{O}_3$, we can write the sorted power set as $S_{sorted} = \left \{\right.\!\{\emptyset\}, \{\mathcal{O}_1\},\{\mathcal{O}_2\},\{\mathcal{O}_1,\mathcal{O}_2\},\{\mathcal{O}_3\},\\ \{\mathcal{O}_1,\mathcal{O}_3\},\{\mathcal{O}_2,\mathcal{O}_3\}, \{\mathcal{O}_1,\mathcal{O}_2,\mathcal{O}_3\} \}$. 
\end{example}

In order to find a trajectory that satisfies a given priority structure, we first assume that all the rules are satisfied. Starting from $S_1=\{\emptyset\}$ in the sorted set $S_{sorted}$, we solve Problem \ref{prob:main} given that no rules are relaxed, i.e., all the rules must be satisfied. If the problem is infeasible, we move to the next set $S_2 \in S_{sorted}$, and relax all the rules of all the equivalence classes in $S_2$ while enforcing satisfaction of all the other rules in the equivalence class set denoted by $R_\mathcal{O} \setminus S_2$. This procedure is done recursively until we find a feasible solution of Problem \ref{prob:main}. 
Formally, at $k = 1,2\dots, 2^{N_\mathcal{O}}$ for $S_k\in S_{sorted}$, we relax all the rules $i\in \mathcal{O}$ for all the equivalence classes $\mathcal{O} \in S_k$ and reformulate Problem \ref{prob:main} as the following optimal control problem:
\begin{equation}
\min_{\bm u,\delta_e, {\delta_i}_{,i\in S_k}} \int_{0}^{T}J(||\bm u||) +  p_e\delta_e^2 +\sum_{i\in S_k}p_i \delta_i^2dt \label{eqn:cost2}
\end{equation}
subject to:\\
\text{\qquad dynamics (\ref{eqn:affine}), control bounds (\ref{eqn:control}), CLF constraint (\ref{eqn:clf1}),}
\begin{align}
&\begin{aligned}L_{f}^{m_{j}}b_{j}(\bm x)+L_{g}L_{f}^{m_{j}-1}b_{j}(\bm x)\bm
u+S(b_{j}(\bm x))&\\+\alpha_{m_j}(\psi_{m_{j}-1}(\bm x))&\geq0, \forall j\in \mathcal{O}, \forall \mathcal{O}\in R_{\mathcal{O}}\setminus S_k,\end{aligned}\label{eqn:optim-relax-rules}
\\
&\begin{aligned}L_{f}^{m_{i}}b_{i}(\bm x)+L_{g}L_{f}^{m_{i}-1}b_{i}(\bm x)\bm
u+S(b_{i}(\bm x))&\\+\alpha_{m_i}(\psi_{m_{i}-1}(\bm x))&\geq \delta_i,\forall i\in \mathcal{O}, \forall \mathcal{O}\in S_k,\end{aligned}\label{eqn:optim-not-relax-rules}
\\
&\begin{aligned}
L_{f}^{m_{l}}b_{l}(\bm x)+L_{g}L_{f}^{m_{l}-1}b_{lim,l}(\bm x)\bm
u&+S(b_{lim,l}(\bm x))\\+\alpha_{m_l}(\psi_{m_{l}-1}(\bm x))&\geq0,\forall l\in \{1,\dots, 2n\},
\end{aligned} \label{eqn:optim-state-cons}
\end{align}
where $p_e > 0$ and $p_i>0, i\in S_k$ assign the trade-off between the the CLF relaxation $\delta_e$ (used for trajectory tracking) and the HOCBF relaxations $\delta_i$. 
$m_i,m_j,m_l$ denotes the relative degree of $b_i(\bm x),b_j(\bm x),b_{lim,l }(\bm x)$, respectively. The functions $b_i(\bm x)$ and $b_j(\bm x)$ are HOCBFs for the rules in $\langle R,\sim_p,\leq_p\rangle$, and are implemented directly from the rule statement for non-clearance rules or by using the optimal disk coverage framework for clearance rules. At relaxation step $k$, HOCBFs corresponding to the rules in $\mathcal{O}$, $\forall\mathcal{O}\in S_k$ are relaxed by adding $p_i>0, i\in S_k$ in \eqref{eqn:optim-relax-rules}, while for other rules in $R$ in \eqref{eqn:optim-not-relax-rules} and the state constraints \eqref{eqn:optim-state-cons}, regular HOCBFs are used. We assign $p_i, i\in S_k$ according to their relative priorities, i.e., we choose a larger $p_i$ for the rule $i$ that belongs to a higher priority class. 
The functions $b_{lim,l}(\bm x), l\in\{1,\dots,2n\}$ are HOCBFs for the state limitations (\ref{eqn:state}). The functions $\psi_{m_i}(\bm x), \psi_{m_j}(\bm x), \psi_{m_l}(\bm x)$ are defined as in (\ref{eqn:functions}). $\alpha_{m_i},\alpha_{m_j},\alpha_{m_l}$ can be penalized to improve the feasibility of the problem above \cite{Xiao2019,Xiao2020CDC}.

If the above optimization problem is feasible for all $t\in[0,T]$, we can specifically determine which rules (within an equivalence class) are relaxed based on the values of $\delta_i, i\in \mathcal{O}, \mathcal{O}\in S_k$ in the optimal solution (i.e., if $\delta_i(t) = 0, \forall t\in\{0,T\}$, then rule $i$ does not need to be relaxed). This procedure is summarized in Alg. \ref{alg:sort}. 
\begin{remark}[Complexity]
The optimization problem (\ref{eqn:cost2}) is solved using QPs introduced in Sec.~\ref{sec:pre}. The complexity of the QP is $O(y^3)$, where $y\in\mathbb{N}$ is the dimension of decision variables. It usually takes less than $0.01s$ to solve each QP in Matlab. The total time for each iteration $k\in\{1,\dots, 2^{N_{\mathcal{O}}}\}$ depends on the final time $T$ and the length of the reference trajectory $\mathcal{X}_r$. The computation time can be further improved by running the code in parallel over multiple processors.   
\end{remark}
\subsection{Pass/Fail Evaluation}\label{sec:p/f}
As an extension to Problem \ref{prob:main}, we formulate and solve a pass / fail (P/F) procedure, in which we are given a vehicle trajectory, and the goal is to accept (pass, P) or reject (fail, F) it based on the satisfaction of the rules. Specifically, given a candidate trajectory $\mathcal{X}_c$ of system (\ref{eqn:affine}), and given a priority structure $\langle R,\sim_p,\leq_p\rangle$, we pass (P) $\mathcal{X}_c$ if we cannot find a better trajectory according to Def. \ref{def:traj_cmp}. Otherwise, we fail (F) $\mathcal{X}_c$. 
We proceed as follows: We find the total violation scores of the rules in $\langle R,\sim_p,\leq_p\rangle$ for the candidate trajectory $\mathcal{X}_c$. If no rules in $R$ are violated, then we pass the candidate trajectory. Otherwise, 
we investigate the existence of a better (less violating) trajectory. We take the middle of ego's current lane as the reference trajectory $\mathcal{X}_r$
and re-formulate the optimal control problem in (\ref{eqn:cost2}) to recursively relax rules such that if the optimization is feasible, the generated trajectory is better than the candidate trajectory $\mathcal{X}_c$. Specifically, assume that 
the highest priority rule(s) that the candidate trajectory $\mathcal{X}_c$ 
violates belongs to $\mathcal{O}_H$, $H \in\mathbb{N}$. Let $R_H\subseteq R_{\mathcal{O}}$ denote the set of equivalence classes with priorities not larger than $H$, and $N_H \in\mathbb{N}$ denote the cardinality of $R_H$. We construct a power set $S_{H} = 2^{R_H}$, and then apply Alg. \ref{alg:sort}, in which we replace $R_{\mathcal{O}}$ by $R_H$. \vspace{-3pt}
\begin{remark}\label{remark:cond-pass}
The procedure described above would fail a candidate trajectory $\mathcal{X}_c$ even if only a slightly better alternate trajectory (i.e., violating rules of the same highest priority but with slightly smaller violation scores) can be found by solving the optimal control problem. In practice, this might lead to an undesirably high failure rate. One way to deal with this, which we will consider in future work (see Sec. \ref{sec:con}), is to allow for more classification categories, e.g., ``Provisional Pass" (PP), which can then trigger further investigation of $\mathcal{X}_c$.   
\end{remark}
\begin{example}
Reconsider Exm. \ref{ex:three-traj} and assume trajectory $b$ is a candidate trajectory which violates rules $r_2, r_4$, 
thus, the highest priority rule that is violated by trajectory $b$ belongs to $\mathcal{O}_2$. 
We construct $R_H = \{ \mathcal{O}_1,\mathcal{O}_2\}$. 
The power set $S_H=2^{R_H}$ is then defined as $S_H = \{ \{\emptyset\},\{\mathcal{O}_1\}, \{\mathcal{O}_2\},\{\mathcal{O}_1,\mathcal{O}_2\}\}$, and is sorted based on the total order as $S_{H_{sorted}} = \{\{\emptyset\}, \{\mathcal{O}_1\},\{\mathcal{O}_2\}, \{\mathcal{O}_1,\mathcal{O}_2\}\}$.
\end{example}
\begin{algorithm}
\caption{Recursive relaxation algorithm for finding optimal trajectory} \label{alg:sort}
	\KwIn{System (\ref{eqn:affine}) with $\bm x(0)$, cost function (\ref{eqn:gcost}), control bound (\ref{eqn:control}), state constraint (\ref{eqn:state}), priority structure $\langle R,\sim_p,\leq_p\rangle$, reference trajectory $\mathcal{X}_r$}
	\KwOut{Optimal ego trajectory and set of relaxed rules}
	1. Construct the power set of equivalence classes $S = 2^{R_\mathcal{O}}$\;
	2. Sort the sets in $S$ based on the highest priority of the equivalence classes in each set according to the total order and get $S_{sorted} = \{S_1, S_2, \dots, S_{2^{N_\mathcal{O}}}\}$\;
	3. $k = 0$\;
	\While{$k++\leq 2^{N_\mathcal{O}}$
	}{
    	Solve (\ref{eqn:cost2}) s.t. (\ref{eqn:affine}), (\ref{eqn:control}), (\ref{eqn:clf1}),  (\ref{eqn:optim-relax-rules}), (\ref{eqn:optim-not-relax-rules}) and (\ref{eqn:optim-state-cons})\; 
    	\If{the above problem is feasible for all $t\in[0,T]$}{
    	    Generate the optimal trajectory $\mathcal{X}^*$ from \eqref{eqn:affine}\;
        	Construct relaxed set $R_{relax} = \{i: i\in \mathcal{O}, \mathcal{O}\in S_{k}\}$\;
        	\If{$\delta_i(t) = 0, \forall t\in[0,T]$}{
        	    Remove $i$ from $R_{relax}$\;}
        	break\;
    	}
	}
	4. Return $\mathcal{X}^*$ and $R_{relax}$\;
\end{algorithm}

\section{Case Study}
\label{sec:case}



In this section, we apply the methodology developed in this paper to specific vehicle dynamics and various driving scenarios. Ego dynamics \eqref{eqn:affine} are defined with respect to a reference trajectory \cite{Rucco2015}, which measures the along-trajectory distance $s\in\mathbb{R}$ and the lateral distance $d\in\mathbb{R}$ of the vehicle Center of Gravity (CoG) with respect to the closest point on the reference trajectory as follows: 
\begin{equation} \label{eqn:vehicle}
   \underbrace{\left[
\begin{array}[c]{c}
    \dot s\\
    \dot d\\
    \dot \mu\\
    \dot v\\
    \dot a\\
    \dot \delta\\
    \dot \omega
\end{array}
\right]}_{\dot {\bm x}}
=
\underbrace{\left[
\begin{array}
[c]{c}%
    \frac{v\cos(\mu + \beta)}{1 - d\kappa}\\
    v\sin(\mu + \beta)\\
    \frac{v}{l_r}\sin\beta - \kappa\frac{v\cos(\mu + \beta)}{1 - d\kappa}\\
    a\\
    0\\
    \omega\\
    0
\end{array}
\right]}_{f(\bm x)}
+
\underbrace{\left[
\begin{array}[c]{cc}%
    0 & 0\\
    0 & 0\\
    0 & 0\\
    0 & 0\\
    1 & 0\\
    0 & 0\\
    0 & 1
\end{array}
\right]}_{g(\bm x)}
\underbrace{\left[
\begin{array}[c]{c}%
    u_{jerk}\\
    u_{steer}
\end{array}
\right]}_{\bm u},
\vspace{-2pt}
\end{equation}
where $\mu$ is the vehicle local heading error determined by the difference of the global vehicle heading $\theta\in\mathbb{R}$ in (\ref{eqn:center}) and the tangent angle $\phi\in\mathbb{R}$ of the closest point on the reference trajectory (i.e., $\theta = \phi + \mu$); $v$, $a$ denote the vehicle linear speed and acceleration; $\delta$, $\omega$ denote the steering angle and steering rate, respectively; $\kappa$ is the curvature of the reference trajectory at the closest point; $l_r$ is the length of the vehicle from the tail to the CoG; and $u_{jerk}$, $u_{steer}$ denote the two control inputs for jerk and steering acceleration as shown in Fig. \ref{fig:frame}. $\beta = \arctan\left(\frac{l_r}{l_r + l_f}\tan\delta\right)$ where $l_f$ is the length of the vehicle from the head to the CoG.\vspace{-2pt}
\begin{figure}[thpb]
	\centering
	\includegraphics[scale=0.3]{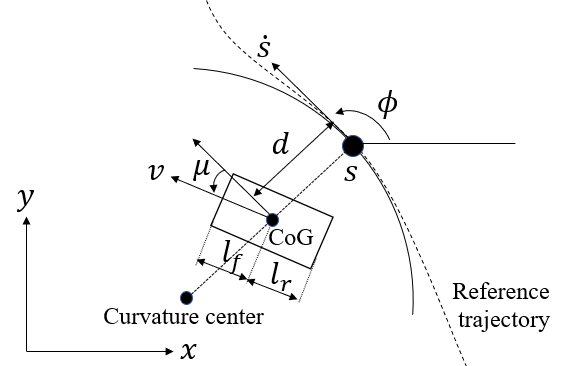}
	\caption{Coordinates of ego w.r.t a reference trajectory.}
	\label{fig:frame}
\end{figure}

We consider the cost function in \eqref{eqn:cost2} as:
\begin{equation}
     \min_{u_{jerk}(t), u_{steer}(t)}\int_{0}^{T}\left[u_{jerk}^2(t) + u_{steer}^2(t)\right]dt.
\end{equation}

The reference trajectory $\mathcal{X}_r$ is the middle of ego's current lane, and is assumed to be given as an ordered sequence of points $\bm p_1$, $\bm p_2$, $\dots$, $\bm p_{N_r}$, where $\bm p_i \in \mathbb{R}^2, i=1,\dots,N_r$ ($N_r$ denotes the number of points). We can find the reference point $p_{i(t)}$, $i:[0,T]\rightarrow \{1,\ldots,N_r\}$ at time $t$ as:
\vspace{-4pt}
\begin{equation}\label{eqn:tracking}
\begin{aligned}
    i(t)= \begin{cases} i(t) + 1 &  ||\bm p(t) - \bm p_{i(t)}||\leq \gamma,\\
     j & \exists j\in\{1,2,\dots, N_r\}:||\bm p(t) \!-\! \bm p_{i(t)}||\!\geq\! ||\bm p(t) \!-\! \bm p_{j}||,
\end{cases}
\end{aligned}
\end{equation}
where $\bm p(t)\in \mathbb{R}^2$ denotes ego's  location. $\gamma > 0$, and $i(0) = k$ for a $k\in\{1,2,\dots, N_r\}$ is chosen such that $||\bm p(0) - \bm p_{j}||\geq ||\bm p(0) - \bm p_{k}|, \forall j\in\{1,2,\, N_r\}$. Once we get $\bm p_{i(t)}$, we can update the progress $s$, the error states $d,\mu$ and the curvature $\kappa$ in (\ref{eqn:vehicle}).  The trajectory tracking in this case is to stabilize the error states $d, \mu$ ($\bm y = (d,\mu)$ in (\ref{eqn:clf1})) to 0, as introduced in Sec. \ref{sec:tracking}. We also wish ego to achieve a desired speed $v_d > 0$ (otherwise, ego may stop in curved lanes). We achieve this by re-defining the CLF $V(\bm x)$ in  (\ref{eqn:clf1}) as $V(\bm x) = ||\bm y||^2 + c_0(v-v_d)^2, c_0 > 0$.  As the relative degree of $V(\bm x)$ w.r.t. (\ref{eqn:vehicle}) is larger than 1, as mentioned in Sec. \ref{sec:tracking}, we use input-to-state linearization and state feedback control \cite{Khalil2002} to reduce the relative degree to one \cite{Xiao2020}. For example, for the desired speed part in the CLF $V(\bm x)$ ( (\ref{eqn:vehicle}) is in linear form from $v$ to $u_{jerk}$, so we don't need to do linearization), we can find a desired state feedback acceleration $\hat a = -k_1(v - v_d), k_1 > 0$. Then we can define a new CLF in the form $V(\bm x) = ||\bm y||^2 + c_0(a -\hat a)^2 = ||\bm y||^2 + c_0(a + k_1(v - v_d))^2$ whose relative degree is just one w.r.t. $u_{jerk}$ in (\ref{eqn:vehicle}). We proceed similarly for driving $d, \mu$ to 0 in the CLF $V(\bm x)$ as the relative degrees of $d, \mu$ are also larger than one.

The control bounds (\ref{eqn:control}) and state constraints (\ref{eqn:state}) are given by:
\begin{equation}\label{eqn:physical}
    \begin{aligned}
    &\text{speed constraint: } \qquad\quad\; v_{\min} \leq v(t)\leq v_{\max},\\
    &\text{acceleration constraint: } \;\; a_{\min}\leq a(t)\leq a_{\max},\\
    &\text{jerk control constraint: }\; u_{j,\min}\leq u_{jerk}(t)\leq u_{j,\max},\\
    &\text{steering angle constraint: } \delta_{\min}\leq \delta(t)\leq \delta_{\max},\\
    &\text{steering rate constraint: }\;\; \omega_{\min}\leq \omega(t)\leq \omega_{\max},\\
    &\text{steering control constraint: } u_{s,\min}\leq u_{steer}(t)\leq u_{s,\max},
    \end{aligned}
\end{equation}

We consider the priority structure $\langle R,\sim_p,\leq_p\rangle$ from Fig. \ref{fig:case_rb}, with rules $R = \{r_1, r_2, r_3, r_4, r_5, r_6, r_7, r_8\}$, where $r_1$ is a pedestrian clearance rule; $r_2$ and $r_3$ are clearance rules for staying in the drivable area and lane, respectively; $r_4$ and $r_5$ are non-clearance rules specifying maximum and minimum speed limits, respectively; $r_6$ is a comfort non-clearance rule; and $r_7$ and $r_8$ are clearance rules for parked and moving vehicles, respectively. The formal rule definitions (statements, violation metrics) are given in Appendix \ref{sec:app-rule-def}. These metrics are used to compute the scores for all the trajectories in the three scenarios below. 
The optimal disk coverage from Sec. \ref{sec:rule-approx} is used to compute the optimal controls for all the clearance rules, which are implemented using HOCBFs.
 
 \begin{figure}[thpb]
	\centering
	\includegraphics[scale=0.25]{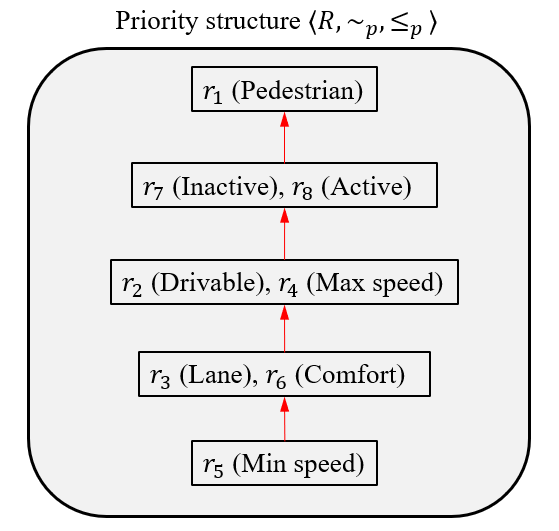}
	\vspace{-3mm}
	\caption{Priority structure for case study. 
	}
	\label{fig:case_rb}%
	\vspace{-3mm}
\end{figure}


In the following, we consider three common driving scenarios in our tool (See Appendix \ref{sec:tool}). For each of them, we solve the optimal control Problem \ref{prob:main} and perform pass/fail evaluation. In all three scenarios, in the pass/fail evaluation, an initial candidate trajectory is drawn ``by hand" using the tool described in the Appendix. We use CLFs to generate a feasible trajectory $\mathcal{X}_c$ which tracks the candidate trajectory subject to the vehicle dynamics (\ref{eqn:affine}), control bounds (\ref{eqn:control}) and state constraints (\ref{eqn:state}). 
\subsection{Scenario 1}
Assume there is an active vehicle, a parked (inactive) vehicle and a pedestrian, as shown in Fig. \ref{fig:case1}. 

\textbf{Optimal control:} 
We solve the optimal control problem (\ref{eqn:cost2}) by starting the rule relaxation from $S_1=\{\emptyset\}$ (i.e., without relaxing any rules). This problem is infeasible in the given scenario since ego cannot maintain the required distance between both the active and the parked vehicles as the clearance rules are speed-dependent. Therefore, we relaxed the next lowest priority equivalence class set in $S_{sorted}$, i.e., the minimum speed limit rule in $S_2=\{\{r_{5}\}\}$, for which we were able to find a feasible trajectory as illustrated in Fig. \ref{fig:case1}. By checking $\delta_i$ for $r_5$ from \eqref{eqn:cost2}, we found it is positive in some time intervals in $[0,T]$, and thus, $r_5$ is indeed relaxed. The total violation score for rule $r_{5}$ from \eqref{eqn:r_5} for the generated trajectory is 0.539, and all other rules in $R$ are satisfied. Thus, by Def. \ref{def:rb_satisfy}, the generated trajectory satisfies $\langle R,\sim_p,\leq_p\rangle$ in Fig. \ref{fig:case_rb}.

\begin{figure}[thpb]
	\centering
	\vspace{-3mm}
	\includegraphics[scale=0.5]{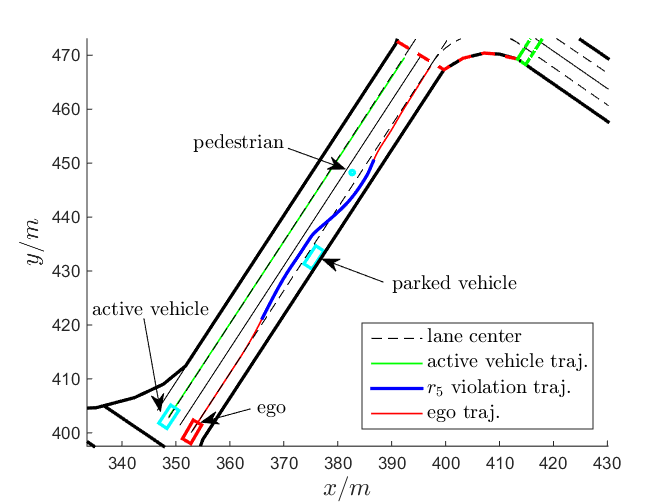}
	\vspace{-3mm}
	\caption{Optimal control for Scenario 1: the subset of ego trajectory violating $r_5$ is shown in blue.
	}
	\label{fig:case1}%
	\vspace{-3mm}
\end{figure}
\textbf{Pass/Fail:} The candidate trajectory $\mathcal{X}_c$ is shown in Fig. \ref{fig:case1_pf}. 
This candidate trajectory only violates rule $r_5$ with total violation score 0.682. Following Sec. \ref{sec:p/f}, we can either relax $r_5$ or do not relax any rules to find a possibly better trajectory. As shown in the above optimal control problem for this scenario, we cannot find a feasible solution if we do not relax rule $r_5$.
Since the violation score of the candidate trajectory is larger than the optimal one, we fail this candidate trajectory. 

\begin{figure}[thpb]
	\centering
	\includegraphics[scale=0.5]{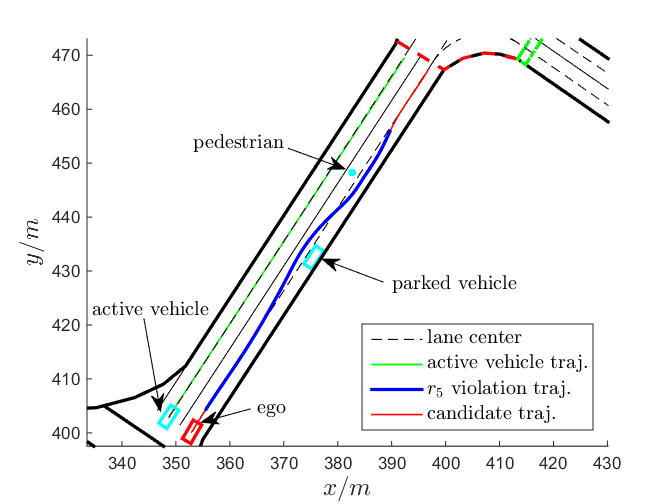}
	\vspace{-3mm}
	\caption{Pass/Fail for Scenario 1: the subset of the candidate trajectory violating $r_5$ is shown in blue.}
	\label{fig:case1_pf}
	\vspace{-3mm}
\end{figure}

\subsection{Scenario 2}
Assume there is an active vehicle, two parked  (inactive) vehicles and two pedestrians, as shown in Fig. \ref{fig:case2}. 

\textbf{Optimal control:} Similar to Scenario 1, the optimal control problem (\ref{eqn:cost2}) starting from $S_1=\{\emptyset\}$ (without relaxing any rules in $R$) is infeasible. We relax the next lowest priority rule set in $S_{sorted}$, i.e., the minimum speed rule in $S_2=\{\{r_{5}\}\}$, for which we are able to find a feasible trajectory as illustrated in Fig. \ref{fig:case2}. Again, the $\delta_i$ for $r_5$ is positive in some time intervals in $[0,T]$, and thus, $r_5$ is indeed relaxed. The total violation score of the rule $r_{5}$ for the generated trajectory is 0.646, and all the other rules in $R$ are satisfied. 

\begin{figure}[thpb]
	\centering
	\vspace{-1mm}
	\includegraphics[scale=0.5]{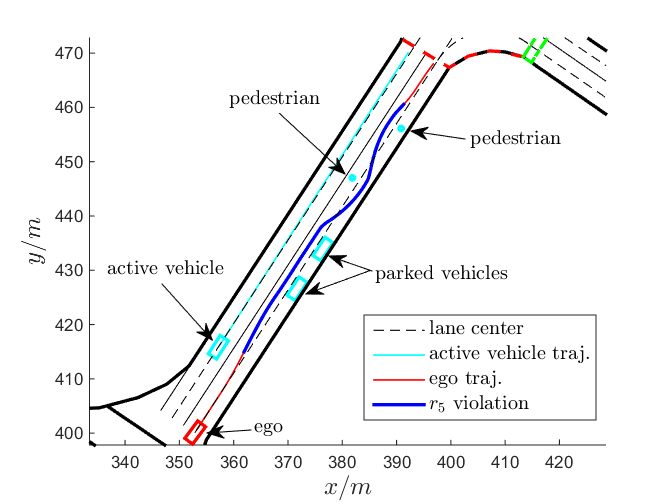}
	\vspace{-3mm}
	\caption{Optimal control for Scenario 2: the subset of ego trajectory violating $r_5$ is shown in blue.}
	\label{fig:case2}%
	\vspace{-3mm}
\end{figure}

\textbf{Pass/Fail:} The candidate trajectory $\mathcal{X}_c$ shown in red dashed line in Fig. \ref{fig:case2_pf} violates rules $r_1, r_{3}$ and $r_{8}$ with total violation scores 0.01, 0.23, 0.22 found from \eqref{eqn:r_1}, \eqref{eqn:r_3},\eqref{eqn:r_8}, 
respectively. 
In this scenario, we know that ego can change lane (where the lane keeping rule $r_3$ is in a lower priority equivalence class than $r_1$) to get reasonable trajectory. Thus, we show the case of relaxing the rules in the equivalence classes $\mathcal{O}_2=\{r_{3}, r_6\}$ and $\mathcal{O}_1=\{r_{5}\}$ to find a feasible trajectory that is better than the candidate one. The optimal control problem (\ref{eqn:cost2}) generates a trajectory as the red-solid curve shown in Fig. \ref{fig:case2_pf}, and only the $\delta_i$ for $r_6$ is 0 for all $[0,T]$. Thus, $r_6$ does not need to be relaxed. The generated trajectory violates rules $r_{3}$ and $r_{5}$ with total violation scores 0.124 and 0.111, respectively, but satisfies all the other rules including the highest priority rule $r_1$. According to Def. \ref{def:traj_cmp} for the given $\langle R,\sim_p,\leq_p\rangle$ in Fig. \ref{fig:case_rb}, the new generated trajectory is better than the candidate one, thus, we fail the candidate trajectory. Note that although this trajectory violates the lane keeping rule, it has a smaller violation score for $r_{5}$ compared to the trajectory obtained from the optimal control in Fig. \ref{fig:case2} (0.111 v.s. 0.646), i.e., the average speed of ego in the red-solid trajectory in Fig. \ref{fig:case2_pf} is larger.

\begin{figure}[thpb]
	\centering
	\vspace{-3mm}
	\includegraphics[scale=0.5]{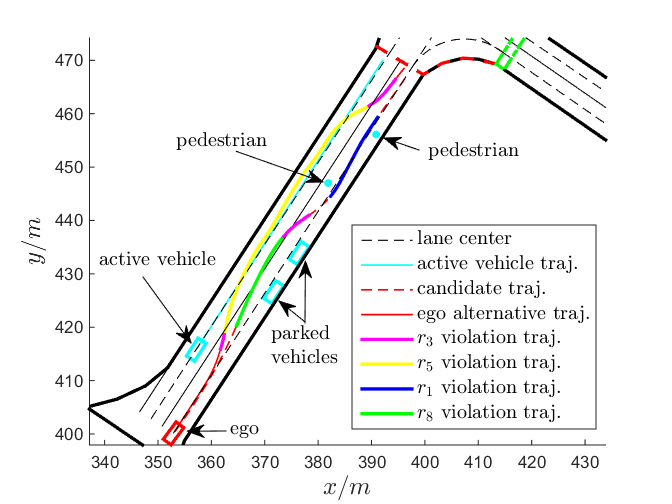}
	\vspace{-3mm}
	\caption{Pass/Fail for Scenario 2: the subsets of ego trajectory violating $r_5, r_3$ are shown in yellow and magenta, respectively; the subsets of the candidate trajectory violating $r_8, r_3, r_1$ are shown in green, magenta and blue, respectively.}
	\label{fig:case2_pf}
	\vspace{-3mm}
\end{figure}

\subsection{Scenario 3}
Assume there is an active vehicle, a parked vehicle and two pedestrians (one just gets out of the parked vehicle), as shown in Fig. \ref{fig:case3}. 

\textbf{Optimal control:} Similar to Scenario 1, the optimal control problem (\ref{eqn:cost2}) starting from $S_1=\{\emptyset\}$ (without relaxing any rules in $R$) is infeasible. We relax the lowest priority rule set in $S_{sorted}$, i.e., the minimum speed rule $S_2=\{\{r_{5}\}\}$, and solve the optimal control problem. In the (feasible) generated trajectory, ego stops before the parked vehicle, which satisfies all the rules in $R$ except $r_{5}$. Thus, by Def. \ref{def:rb_satisfy}, the generated trajectory satisfies the priority structure $\langle R,\sim_p,\leq_p\rangle$. However, this might not be a desirable behavior, thus, we further relax the lane keeping $r_{3}$ and comfort $r_6$ rules and find the feasible trajectory shown in Fig. \ref{fig:case3}. $\delta_i$ for $r_6$ is 0 for all $[0,T]$, and, therefore, $r_6$ does not need to be relaxed. The total violation scores for the rules $r_{3}$ and $r_{5}$ are 0.058 and 0.359, respectively, and all other rules in $R$ are satisfied.
\begin{figure}[thpb]
	\centering
	\includegraphics[scale=0.5]{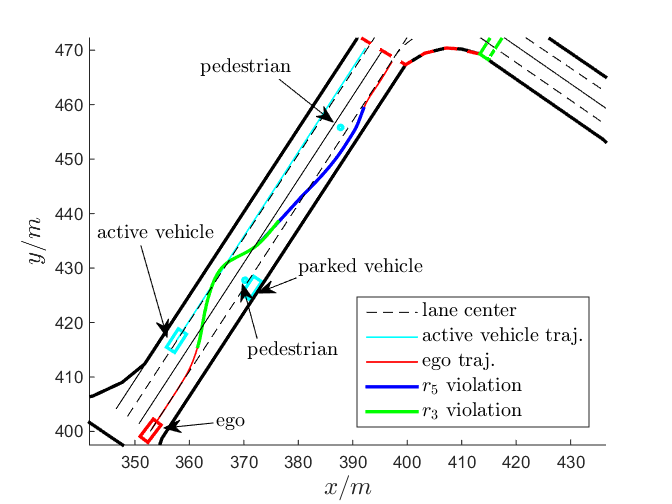}
	\vspace{-3mm}
	\caption{Optimal control for Scenario 3: the subsets of ego trajectory violating $r_5, r_3$ are shown in blue and green, respectively.}
	\label{fig:case3}%
	\vspace{-3mm}
\end{figure}

\textbf{Pass/Fail:} The candidate trajectory $\mathcal{X}_c$ shown as the red-dashed curve in Fig. \ref{fig:case3_pf} violates rules $r_{3}$ and $r_{8}$ with total violation scores 0.025 and 0.01, respectively. In this scenario, from the optimal control in Fig. \ref{fig:case3} we know that ego can change lane (where the lane keeping rule is in a lower priority equivalence class than $r_8$). We show the case of relaxing the rules in the equivalence classes $\mathcal{O}_2=\{r_{3}, r_6\}$ and $\mathcal{O}_1=\{r_{5}\}$ (all have lower priorities than $r_{8}$). The optimal control problem (\ref{eqn:cost2}) generates the red-solid curve shown in Fig. \ref{fig:case3_pf}. By checking $\delta_i$ for $r_6$, we found that $r_6$ is indeed not relaxed. The generated trajectory violates rules $r_{3}$ and $r_{5}$ with total violation scores 0.028 and 0.742, respectively, but satisfies all other rules including $r_{8}$. According to Def. \ref{def:traj_cmp} and Fig. \ref{fig:case_rb}, the new generated trajectory (although violates $r_{3}$ more than the candidate trajectory, it does not violate $r_{8}$ which has a higher priority) is better than the candidate one. Thus, we fail the candidate trajectory. 

\begin{figure}[thpb]
	\centering
	\includegraphics[scale=0.5]{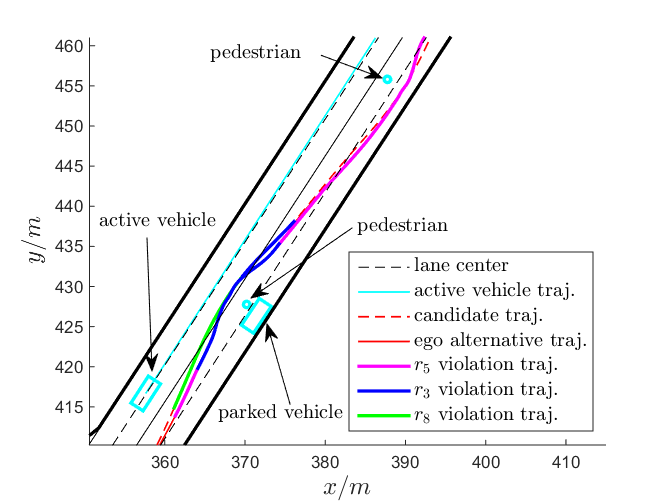}
	\vspace{-3mm}
	\caption{Pass/Fail for Scenario 3: the subsets of ego trajectory violating $r_8, r_5, r_3$ are shown in green, magenta and blue, respectively; the subsets of the candidate trajectory violating $r_5, r_3$ are shown in magenta and blue, respectively. 
	}
	\label{fig:case3_pf}
	\vspace{-3mm}
\end{figure}

\section{Conclusions and Future Work}
\label{sec:con}
We developed a framework to design optimal control strategies for autonomous vehicles that are required to satisfy a set of traffic rules with a given priority structure, while following a reference trajectory and satisfying control and state limitations. We showed that, for commonly used traffic rules, by using control barrier functions and control Lyapunov functions, the problem can be cast as an iteration of optimal control problems, where each iteration involves a sequence of quadratic programs. We also showed that the proposed algorithms can be used to pass / fail possible autonomous vehicle behaviors against prioritized traffic rules. We presented multiple case studies for an autonomous vehicle with realistic dynamics and conflicting rules. Future work will be focused on learning priority structures from data, improving the feasibility of the control problems, and refinement of the pass / fail procedure. 


\bibliographystyle{ACM-Reference-Format}
\bibliography{Biblio}


\newpage
\appendix
\section*{APPENDIX}
\section{Rule definitions}\label{sec:app-rule-def}
Here we give definitions for the rules used in Sec. \ref{sec:case}. According to Def. \ref{def:rule}, 
each rule statement should be satisfied for all times. 
\begin{equation}
    \begin{aligned} \label{eqn:r_1}
    r_1: &\text{ Maintain clearance with pedestrians}\\
    &\text{Statement: } d_{min,fp}(\bm x, \bm x_i)\geq d_{1} + v(t)\eta_1,\forall i\in S_{ped}\\
     &\varrho_{r,i}(\bm x(t)) = \max(0, \frac{d_{1} + v(t)\eta_1 - d_{min,fp}(\bm x, \bm x_i)}{d_{1} + v_{max}\eta_1})^2,\\
     &\rho_{r,i}(\mathcal{X}) = \max_{t\in[0,T]} \varrho_{r,i}(\bm x(t)), \quad
     P_r = \sqrt{\frac{1}{n_{ped}}\sum_{i\in S_{ped}}\rho_{r,i}}.
    \end{aligned}
\end{equation}
where $d_{min,fp}:\mathbb{R}^{n+n_i}\rightarrow \mathbb{R}$ 
denotes the distance between footprints of ego and the pedestrian $i$, and the clearance threshold is given based on a fixed distance $d_1\geq 0$ and increases linearly by $\eta_1 > 0$ based on ego speed $v(t) \geq 0$ ($d_1$ and $\eta_1$ are determined empirically), $S_{ped}$ denotes the index set of all pedestrians, and $\bm x_i\in \mathbb{R}^{n_i}$ denotes the state of the pedestrian $i$. $v_{max}$ is the maximum feasible speed of the vehicle and is used to define the normalization term in $\varrho_{r,i}$, which assigns a violation score (based on a L-$2$ norm) if formula is violated by $\bm x(t)$. $\rho_{r,i}$ defines the instance violation score as the most violating instant over $\mathcal{X}$. $P_r$ aggregates the instance violations over all units (pedestrians), where $n_{ped}\in\mathbb{N}$ denotes the number of pedestrians. 

\begin{equation}
    \begin{aligned} \label{eqn:r_2}
    r_{2}: &\text{ Stay in the drivable area}\\
    &\text{Statement: } d_{left}(\bm x(t)) + d_{right}(\bm x(t)) = 0\\
     &\varrho_r(\bm x(t)) = \left(\frac{d_{left}(\bm x(t)) + d_{right}(\bm x(t))}{2d_{max}}\right)^2,\\
     &\rho_r (\mathcal{X})= \sqrt{\frac{1}{T}\int_0^T \varrho_r(\bm x(t))dt}, \quad
     P_r = \rho_r.
    \end{aligned}
\end{equation}
where $d_{left}:\mathbb{R}^n\rightarrow\mathbb{R}, d_{right}:\mathbb{R}^n\rightarrow\mathbb{R}$ denote the left and right infringement distances of ego footprint into the non-drivable areas, respectively. $d_{\max} > 0$ denotes the maximum infringement distance and is used to normalize the instantaneous violation score defined based on a L-$2$ norm, and $\rho_r$ is the aggregation over trajectory duration $T$. 

\begin{equation}\label{eqn:r_3}
    \begin{aligned}
    r_{3}: &\text{ Stay in lane}\\
    &\text{Statement: } d_{left}(\bm x(t)) + d_{right}(\bm x(t)) = 0\\
     &\varrho_r(\bm x(t)) = \left(\frac{d_{left}(\bm x(t)) + d_{right}(\bm x(t))}{2d_{max}}\right)^2,\\
     &\rho_r (\mathcal{X})= \sqrt{\frac{1}{T}\int_0^T \varrho_r(\bm x(t))dt}, \quad
     P_r = \rho_r.
    \end{aligned}
\end{equation}
where $d_{left}:\mathbb{R}^n\rightarrow\mathbb{R}, d_{right}:\mathbb{R}^n\rightarrow\mathbb{R}$ denote the left and right infringement distances of ego footprint into the left and right lane boundaries, respectively, and violation scores are defined similar to the rule $r_{2}$.
\begin{equation}\label{eqn:r_4}
    \begin{aligned}
    r_{4}: &\text{ Satisfy the maximum speed limit}\\
    &\text{Statement: } v(t) \leq v_{max,s}\\
     &\varrho_r(\bm x(t)) = \max(0, \frac{v(t) - v_{max,s}}{v_{max}})^2,\\
     &\rho_r(\mathcal{X}) = \sqrt{\frac{1}{T}\int_0^T \varrho_r(\bm x(t))dt}, \quad
     P_r = \rho_r.
    \end{aligned}
\end{equation}
where $v_{max,s}>0$ denotes the maximum speed in a scenario $s$ and varies for different road types (e.g., highway, residential, etc.).
\begin{equation}\label{eqn:r_5}
    \begin{aligned}
    r_{5}: &\text{ Satisfy the minimum speed limit}\\
    &\text{Statement: } v(t) \geq v_{min,s}\\
     &\varrho_r(\bm x(t)) = \max(0, \frac{v_{min,s} - v(t)}{v_{min, s} - v_{min}})^2,\\
     &\rho_r(\mathcal{X}) = \sqrt{\frac{1}{T}\int_0^T \varrho_r(\bm x(t))dt}, \quad
     P_r = \rho_r.
    \end{aligned}
\end{equation}
where $v_{min,s}>0$ denotes the minimum speed in a scenario $s$ which varies for different road types and $v_{min}>0$ is the minimum feasible speed of the vehicle. 
\begin{equation}\label{eqn:r_6}
    \begin{aligned}
    r_{6}: &\text{ Drive smoothly}\\
    &\text{Statement: } |a(t)| \leq a_{max,s} \wedge |a_{lat}(t)| \leq a_{lat,s}\\
     &\varrho_r(\bm x(t)) = \left(\max(0, \frac{a_{max,s} - |a(t)|}{a_{max}}) + \max(0, \frac{a_{lat,s} - |a_{lat}(t)|}{a_{lat_{m}}})\right)^2,\\
     &\rho_r(\mathcal{X}) = \sqrt{\frac{1}{T}\int_0^T \varrho_r(\bm x(t))dt}, \quad
     P_r = \rho_r.
    \end{aligned}
\end{equation}
where $a_{lat}(t) = \kappa v^2(t)$ denotes the lateral acceleration at time instant $t$; $a_{max,s}>0,\; a_{lat,s}>0$ denote the maximum and the allowed lateral acceleration in a scenario $s$, respectively; and $a_{max}$ and $a_{lat_{m}}> 0$ denote the maximum feasible acceleration and maximum feasible lateral acceleration of the vehicle.
\begin{equation}\label{eqn:r_7}
    \begin{aligned}
    r_{7}: &\text{ Maintain clearance with parked vehicles}\\
    &\text{Statement: } d_{min,fp}(\bm x, \bm x_i)\geq d_{7} + v(t)\eta_7,\forall i\in S_{pveh}\\
     &\varrho_{r,i}(\bm x(t)) = \max(0, \frac{d_{7} + v(t)\eta_7 - d_{min,fp}(\bm x, \bm x_i)}{d_{7} + v_{max}\eta_7})^2,\\
     &\rho_{r,i}(\mathcal{X}) = \max_{t\in[0,T]} \varrho_i(\bm x(t)), \quad
     P_r = \sqrt{\frac{1}{n_{pveh}}\sum_{i\in S_{pveh}}\rho_{r,i}}
    \end{aligned}
\end{equation}
where $d_{min,fp}:\mathbb{R}^{n+n_i}\rightarrow \mathbb{R}$ denotes the distance between footprints of ego and the parked vehicle $i$, $d_7\geq 0,\; \eta_7 > 0$, and violation scores are defined similar to $r_{1}$, $S_{pveh}$ and $n_{pveh}\in\mathbb{N}$ denote the index set and number of parked vehicles, respectively, and $\bm x_i\in \mathbb{R}^{n_i}$ denotes the state of the parked vehicle $i$. 

\begin{equation}\label{eqn:r_8}
    \begin{aligned}
    r_{8}: &\text{ Maintain clearance with active vehicles}\\
    &\text{Statement: } \quad d_{min,l}(\bm x, \bm x_i)\geq d_{8,l} + v(t)\eta_{8,l} \\
    &\qquad\qquad \wedge  \; d_{min,r}(\bm x, \bm x_i)\geq d_{8,r} + v(t)\eta_{8,r}  \\
    &\qquad\qquad \wedge  \; d_{min,f}(\bm x, \bm x_i)\geq d_{8,f} + v(t)\eta_{8,f},\forall i\in S_{aveh}\\
     &\varrho_{r,i}(\bm x(t)) = \frac{1}{3}(\max(0, \frac{d_{8,l} + v(t)\eta_{8,l} - d_{min,l}(\bm x, \bm x_i)}{d_{8,l} + v_{max}\eta_{8,l}})^2 \\&\qquad\qquad+ \max(0, \frac{d_{8,r} + v(t)\eta_{8,r} - d_{min,r}(\bm x, \bm x_i)}{d_{8,r} + v_{max}\eta_{8,r}})^2 \\&\qquad\qquad+ \max(0, \frac{d_{8,f} + v(t)\eta_{8,f} - d_{min,f}(\bm x, \bm x_i)}{d_{8,f} + v_{max}\eta_{8,f}})^2),\\
     &\rho_{r,i}(\mathcal{X}) = \frac{1}{T}\int_0^T  \varrho_{r,i}(\bm x(t))dt, \quad
     P_r = \sqrt{\frac{1}{n_{aveh}-1}\sum_{i\in S_{aveh}\setminus {ego}}\rho_{r,i}}
    \end{aligned}
\end{equation}
where $d_{min, l}:\mathbb{R}^{n+n_i}\rightarrow \mathbb{R},d_{min, r}:\mathbb{R}^{n+n_i}\rightarrow \mathbb{R},d_{min, f}:\mathbb{R}^{n+n_i}\rightarrow \mathbb{R}$ denote the distance between footprints of ego and the active vehicle $i$ on the left, right and front, respectively; $d_{8,l}\geq 0,d_{8,r}\geq 0,d_{8,f}\geq 0, \eta_{8,l} > 0,\eta_{8,r} > 0,\eta_{8,f} > 0$ are defined similarly as in  $r_{1}$, and $S_{aveh}$ and $n_{aveh}\in\mathbb{N}$ denote the index set and number of active vehicles, and $\bm x_i\in \mathbb{R}^{n_i}$ denotes the state of the active vehicle $i$. Similar to Fig. \ref{fig:approx}, we show in Fig. \ref{fig:rule8} how $r_8$ is defined based on the clearance region and optimal disk coverage proposed in Sec. \ref{sec:app-coverage}.
\begin{figure}[thpb]
	\centering
	\vspace{-2mm}
	\includegraphics[scale=0.25]{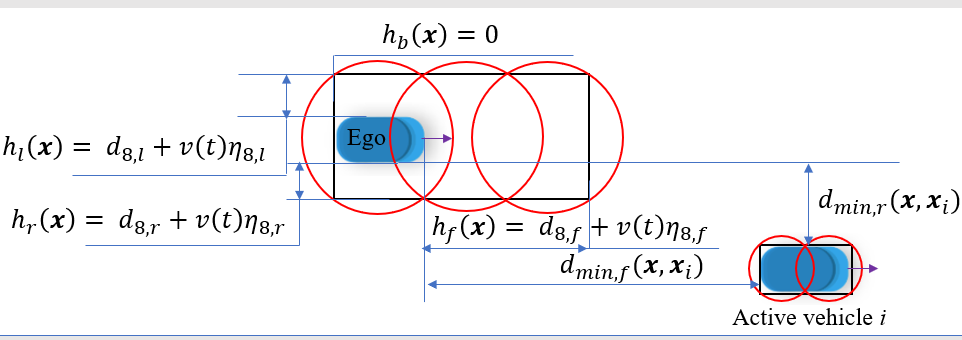}
	\vspace{-4mm}
	\caption{Formulation of $r_8$ with the optimal disk coverage approach: $r_8$ is satisfied since clearance regions of ego and the active vehicle $i\in S_p$ do not overlap.}
 	\label{fig:rule8}%
 	\vspace{-3mm}
\end{figure}
\section{Optimal Disk Coverage}\label{sec:app-coverage}


To construct disks to fully cover the clearance regions, we need to find their number and radius. From Fig. \ref{fig:proof}, the lateral approximation error $\sigma >0$ is given by:
\begin{equation}\label{eq:sigma}
    \sigma = \mathfrak{r} - \frac{w + h_l(\bm x) + h_r(\bm x)}{2}.
\end{equation}
Since $\sigma$ for ego depends on its state $\bm x$ (speed-dependent), we consider the accumulated lateral approximation error for all possible $\bm x\in X$. This allows us to determine $z$ and $\mathfrak{r}$ such that the disks fully cover ego clearance region for all possible speeds in $\bm x$. Let $\bar h_i = \sup_{\bm x\in X}h_i(\bm x), \underline{h}_i = \inf_{\bm x\in X}h_i(\bm x), i\in\{f,b,l,r\}$. We can formally formulate the construction of the approximation disks as an optimization problem:

\begin{equation}\label{eqn:opcircle}
    \min_{z} z + \beta \int_{\underline{h}_f}^{\bar h_f}\int_{\underline{h}_b}^{\bar h_b}\int_{\underline{h}_l}^{\bar h_l}\int_{\underline{h}_r}^{\bar h_r}\sigma dh_f(\bm x)dh_b(\bm x)dh_l(\bm x)dh_r(\bm x)
\end{equation}
subject to
\begin{equation}
    \begin{aligned}
    z \in \mathbb{N},
    \end{aligned}
\end{equation}
where $\beta\geq 0$ is a trade-off between minimizing the number of the disks (so as to minimize the number of constraints considered with CBFs) and the coverage approximation error. The above optimization problem is solved offline. 
A similar optimization is formulated for construction of disks for instances in $S_p$ (we remove the integrals due to speed-independence). Note that for the driving scenarios studied in this paper, we omit the longitudinal approximation errors in the front and back. The lateral approximation errors are considered in the disk formulation since they induce conservativeness in the lateral maneuvers of ego required for surpassing other instances (such as parked car, pedestrians, etc.), see Sec. \ref{sec:case}.

Let $(x_e,y_e) \in\mathbb{R}^2$ be the center of ego and $(x_i,y_i) \in\mathbb{R}^2$ be the center of the instance $i\in S_p$. The center of the disk $j$ for ego $(x_{e,j}, y_{e,j}), j\in \{1,\dots,z\}$ is determined by:
\begin{equation} \label{eqn:center}
\begin{aligned}
    x_{e,j} = x_e + \cos\theta_e(-\frac{l}{2} - h_b(\bm x) + \frac{l+h_f(\bm x) + h_b(\bm x)}{2z}(2j - 1))\\
    y_{e,j} = y_e + \sin\theta_e(-\frac{l}{2} - h_b(\bm x) + \frac{l+h_f(\bm x) + h_b(\bm x)}{2z}(2j - 1))
\end{aligned}
\end{equation}
where $j\in\{1,\dots, z\}$ and $\theta_e\in\mathbb{R}$ denotes the heading angle of ego. The center of the disk $k$ for the instance $i \in S_p$ denoted by $(x_{i,k}, y_{i,k}), k\in \{1,\dots,z_i\}$ can be defined similarly. 

\begin{theorem} \label{thm:cover}
If the clearance regions of ego and the instance $i\in S_p$ are covered by the disks constructed by solving (\ref{eqn:opcircle}), then satisfaction of (\ref{eqn:rule_cons}) implies non-overlapping of the clearance regions between ego and the instance $i$.
\end{theorem}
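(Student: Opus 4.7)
The plan is to prove the contrapositive by showing that if the clearance regions overlap, then some pair of disks must have centers closer than the sum of their radii, which would violate \eqref{eqn:rule_cons}. Equivalently, I will show that \eqref{eqn:rule_cons} implies that the union of ego's covering disks is disjoint from the union of instance $i$'s covering disks, and then use the coverage property coming from the solution of \eqref{eqn:opcircle}.

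First, I would record the elementary geometric fact underlying the argument: two closed disks $D(c_1,\mathfrak{r})$ and $D(c_2,\mathfrak{r}_i)$ intersect if and only if $\|c_1-c_2\| \le \mathfrak{r} + \mathfrak{r}_i$. Applying this pairwise to the centers $(x_{e,j},y_{e,j})$ and $(x_{i,k},y_{i,k})$ defined in \eqref{eqn:center}, constraint \eqref{eqn:rule_cons} is exactly the statement that, for every admissible pair $(j,k)$, the corresponding disks have disjoint interiors. Hence the union $\mathcal{D}_e := \bigcup_{j=1}^{z} D((x_{e,j},y_{e,j}),\mathfrak{r})$ is disjoint from $\mathcal{D}_i := \bigcup_{k=1}^{z_i} D((x_{i,k},y_{i,k}),\mathfrak{r}_i)$.

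Next, I would invoke the coverage property built into the construction in \eqref{eqn:opcircle} and \eqref{eqn:radius}: the radius $\mathfrak{r}$ was chosen so that $z$ disks of radius $\mathfrak{r}$ centered at the points given by \eqref{eqn:center} cover ego's entire rectangular clearance region $\mathcal{C}_e$, and analogously the $z_i$ disks cover the clearance region $\mathcal{C}_i$ of instance $i$. A short verification (fit the furthest corner of a sub-rectangle of dimensions $(w+h_l+h_r) \times \frac{l+h_f+h_b}{z}$ into a disk of radius $\mathfrak{r}$) confirms $\mathcal{C}_e \subseteq \mathcal{D}_e$ and $\mathcal{C}_i \subseteq \mathcal{D}_i$. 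Combining this with the disjointness of $\mathcal{D}_e$ and $\mathcal{D}_i$ yields $\mathcal{C}_e \cap \mathcal{C}_i = \emptyset$, which is exactly the non-overlap claim.

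I expect the proof to be essentially a short chain of containments with no significant obstacle, since the heavy lifting is done by the definition of $\mathfrak{r}$ in \eqref{eqn:radius}, which is the smallest radius making the covering property hold. The only subtlety worth flagging is that the covering property must be argued for \emph{all} admissible states $\bm x \in X$ (because $h_f,h_b,h_l,h_r$ are state-dependent for ego); this is precisely what the integration over $\underline{h}_\bullet$ to $\bar h_\bullet$ in \eqref{eqn:opcircle} accounts for, combined with the fact that $\mathfrak{r}$ in \eqref{eqn:radius} is an upper bound taken pointwise at the current state. Thus, at each time the instantaneous disks cover the instantaneous clearance regions, and the pairwise constraint \eqref{eqn:rule_cons} then forbids intersection, completing the proof.
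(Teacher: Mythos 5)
Your argument is correct and follows essentially the same route as the paper's own (much terser) proof: constraint \eqref{eqn:rule_cons} forces pairwise disjointness of the covering disks, and since the disks cover the respective clearance regions, the regions themselves cannot overlap. Your additional remarks on verifying the covering radius from \eqref{eqn:radius} and on the state-dependence of $h_f,h_b,h_l,h_r$ are sound elaborations of steps the paper leaves implicit.
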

\begin{proof}
Consider $z$ and $z_i$ disks with minimum radius $\mathfrak{r}$ and $\mathfrak{r}_i$ from (\ref{eqn:radius}) associated with clearance region of ego and instance $i\in S_p$, respectively. The constraints in (\ref{eqn:rule_cons}) guarantee that there is no overlap of the disks between the vehicle $i\in S_p$ and instance $j\in S_p$. Since the clearance regions are fully covered by these disks, we conclude that the clearance regions do not overlap.
\end{proof}


\section{Software tool and simulation parameters}
\label{sec:tool}

We implemented the computational procedure described in this paper as a user-friendly 
software tool in Matlab. The tool allows to load a map represented by a .json file and place vehicles and pedestrians on it. It provides an interface to generate smooth reference / candidate trajectories and it implements our proposed optimal control and P/F frameworks; the $quadprog$ optimizer was used to solve the QPs (solve time $<0.01s$ for each QP) and $ode45$ to integrate the vehicle dynamics (\ref{eqn:vehicle}). All the computation in this paper was performed on a Intel(R) Core(TM) i7-8700 CPU @
3.2GHz$\times 2$.

The simulation parameters are considered as follows: $v_{max} = 10m/s, v_{min} = 0m/s, a_{max} = -a_{min} = 3.5m/s^2, u_{j,max} = -u_{j,min} = 4m/s^3, \delta_{max} = -\delta_{min} = 1rad, \omega_{max} = -\omega_{min} = 0.5rad/s, u_{s,max} = -u_{s,min} = 2rad/s^2, w = 1.8m, l = 4m, l_f = l_r = 2m, d_1 = 1m, \eta_1 = 0.067s, v_{max,s} = 7m/s, v_{min,s} = 3m/s, a_{max,s} = 2.5m/s^2, a_{lat_{m}} = 3.5m/s^2, a_{lat,s} = 1.75m/s^2, d_7 = 0.3m, \eta_7 = 0.13s, d_{8,l} = d_{8,r} = 0.5m, d_{8,f} = 1m, \eta_{8,r} = \eta_{8,l} = 0.036s, \eta_{8,f} = 2s, v_d = 4m/s, \beta = 2$ in (\ref{eqn:opcircle}). 
\end{document}